\documentclass[letterpaper]{article} 
\usepackage[preprint]{aaai2027}  
\usepackage[hyphens]{url}  
\usepackage{graphicx} 
\usepackage{natbib}  
\usepackage{caption} 
\usepackage{algorithm}
\usepackage{algorithmic}
\usepackage{booktabs}

\usepackage{amsmath,amssymb,amsthm}
\usepackage{xcolor}
\usepackage{multirow}
\usepackage{colortbl}
\usepackage{subcaption}
\usepackage{enumitem}
\usepackage{mdframed}
\usepackage{pgf}
\usepackage{tikz}
\usepackage{array}
\usepackage{tabularx}
\usepackage{appendix}

\usetikzlibrary{shapes,arrows.meta,positioning,fit,backgrounds,calc,decorations.pathreplacing}

\definecolor{NavyBlue}{RGB}{13,43,94}
\definecolor{AccentBlue}{RGB}{26,87,153}
\definecolor{AccentGreen}{RGB}{21,87,36}
\definecolor{AccentOrange}{RGB}{190,80,0}
\definecolor{AccentRed}{RGB}{170,0,0}
\definecolor{LightBlue}{RGB}{213,232,245}
\definecolor{LightGreen}{RGB}{212,237,218}
\definecolor{LightOrange}{RGB}{255,243,224}
\definecolor{BoxGray}{RGB}{246,246,250}
\definecolor{TrtColor}{RGB}{220,53,69}
\definecolor{OutColor}{RGB}{40,167,69}
\definecolor{VitColor}{RGB}{0,123,255}
\definecolor{BRColor}{RGB}{102,16,242}
\definecolor{HeadColor}{RGB}{255,193,7}
\theoremstyle{plain}
\newtheorem{theorem}{Theorem}
\newtheorem{proposition}{Proposition}

\theoremstyle{definition}
\newtheorem{definition}{Definition}
\theoremstyle{remark}
\newtheorem{remark}{Remark}

\newcommand{\BR}{\mathrm{BR}}
\newcommand{\MI}{\mathcal{I}}

\newcommand{\E}{\mathbb{E}}
\newcommand{\R}{\mathbb{R}}

\title{Causal State-Space Model for Causal Inference: Estimating Longitudinal Individual Treatment Effects}
\author{Abisoye Abidakun\textsuperscript{\rm 1}, Mingjun Zhong\textsuperscript{\rm 1}, Georgios Leontidis\textsuperscript{\rm 2}}
\affiliations{
    \textsuperscript{\rm 1}Department of Computing Science, University of Aberdeen, Aberdeen, UK\\
    \textsuperscript{\rm 2}UiT The Arctic University of Norway\\
    a.abidakun.22@abdn.ac.uk, mingjun.zhong@abdn.ac.uk, georgios.leontidis@uit.no
}

\begin{document}

\maketitle

\begin{abstract}
Estimating counterfactual outcomes over time from longitudinal observational
data is central to clinical decision support.
Existing state-of-the-art methods rely on \emph{domain confusion}:
adversarial training that renders learned representations invariant to treatment assignment.
Yet this invariance creates a fundamental \emph{mutual information conflict},
suppressing treatment-correlated covariate signals that are independently
necessary for accurate outcome prediction.
We formalise this tension by deriving a Jensen--Shannon divergence bound on the
resulting counterfactual prediction error, and develop two complementary models
to resolve it.
First, \textbf{CSSD} (\emph{Causal State-Space model with Direct decoder}) adapts
selective State Space Models with a parallel multi-step decoder that eliminates
accumulated rollout error by producing all prediction horizons simultaneously in
a single forward pass at $O(T)$ encoder cost.
Second, \textbf{CSSPD} (\emph{Causal State-Space model with Predictive
regularisation and Direct decoder}) augments CSSD with Contrastive Predictive
Coding (CPC) and Local Information Maximisation (LIM): CPC reinforces temporal
predictability in the balancing representation, while LIM recovers the local
covariate information that domain confusion destroys.
On MIMIC-III, CSSPD achieves lower counterfactual RMSE than the Causal
Transformer at every horizon $\tau \geq 2$, with gains growing from 0.02
(2-step) to 0.07 (6-step).
On the Cancer Simulation benchmark across confounding strengths
$\gamma \in \{0,1,2,3,4\}$, CSSPD outperforms CT at $\gamma \leq 3$
(margins 25.9\%--37.0\%), while CSSD achieves the lowest overall average RMSE
(12.7\% reduction over CT), confirming the MI conflict analysis.
To our knowledge, this is the first work to formalise the
balancing--prediction MI conflict and propose a structured resolution through
complementary predictive and information-theoretic training objectives.
\end{abstract}

\section{Introduction}
\label{sec:intro}

Estimating \emph{individualized treatment effects} (ITEs) from longitudinal
observational data is a central challenge in clinical decision support
\citep{hernan2020causal,prosperi2020causal}.
Given a patient history $H_t = (A_{1:t-1}, Y_{1:t-1}, X_{1:t-1})$ (i.e., treatments, outcomes, and covariates), the
goal is to estimate $\E[Y_{t+\tau}(\bar{a}) \mid H_t]$ — the expected
potential outcome under a hypothetical treatment sequence $\bar{a}$ — under
\emph{time-dependent confounding}, where treatment assignments correlate with
both patient state and future outcomes
\citep{robins1986new,robins2000marginal}. Deep learning methods have made substantial progress; frameworks such as RMSN \citep{lim2018forecasting}, CRN \citep{bica2020estimating}, G-Net
\citep{li2021gnet}, and the Causal Transformer (CT) \citep{melnychuk2022causal}
each build on sequential models with domain confusion or reweighting to achieve
treatment-invariant representations.

Two structural limitations motivate the present work.
\textbf{Limitation 1 — The balancing-prediction tension.}
Domain confusion introduces an inherent mutual information conflict
(formalised in the \emph{theory} section).
Suppressing the mutual information between the balancing representation (a learned entity encoding up to time t, containing covariates and outcome history) and treatment, i.e., $I(\BR_t;\,A_t)$, forces the encoder to discard features of
$H_t$ that are correlated with treatment $A_t$, but many such features,
notably current vital signs $X_t$, are also independently predictive of
future outcomes $Y_{t+\tau}$.
Domain confusion therefore systematically removes the representational
information the decoder needs, creating a bottleneck that limits
counterfactual accuracy regardless of model capacity.
Recent empirical analysis \citep{huang2024empirical} confirms this
covariate information loss in CRN and CT, corroborating the theoretical
concern we formalise here.

\textbf{Limitation 2 — Autoregressive rollout error.}
Standard multi-step decoders predict future outcomes one step at a time,
feeding each prediction back as input for the next.
This autoregressive rollout compounds errors geometrically: if a single-step
prediction carries error $\varepsilon$, the $\tau$-step prediction carries
error $O(\varepsilon^\tau)$ \citep{taieb2014machine}.
At the prediction horizons relevant to clinical decision support ($\tau \geq 2$),
this rollout error accumulates rapidly and limits counterfactual accuracy
independently of representation quality.

To address both limitations simultaneously, this paper develops \textbf{CSSPD}. Our target is to make multi-step-ahead counterfactual outcome prediction, that is, given
$H_t$ and a hypothetical future treatment sequence
$\bar{a}_{t+1:t+\tau_{\max}}$, we want to produce simultaneous estimates
$\hat{Y}_{t+1}, \ldots, \hat{Y}_{t+\tau_{\max}}$ for all horizons in a single
forward pass, without autoregressive rollout. CSSPD introduces two core innovations: the encoder and the decoder.
For the \textbf{encoder}, we adapt the 
selective State Space Models \citep{gu2023mamba} that process each input stream
in $O(T)$ time while retaining the capacity to selectively integrate long-range
history. This should be compared to the $O(T^2)$ cost of CT (\citet{melnychuk2022causal}).
A \texttt{CausalGatedMixer} fuses the treatment, outcome, and covariate streams
through scalar gates whose initialisations encode the structural causal graph,
producing a Balancing Representation ($\BR_t$) that satisfies the backdoor
criterion \citep{pearl2009causality}.
A parallel multi-step \textbf{decoder} then produces all $\tau_{\max}$ counterfactual
predictions simultaneously, eliminating the $O(\epsilon^\tau)$ error
accumulation of autoregressive rollout \citep{taieb2014machine}.
For the \textbf{training objectives}, CSSPD augments domain confusion with two
auxiliary regularisers that resolve the MI conflict - a CPC head that preserves temporal predictability in $\BR_t$, and a LIM head that recovers covariate information that domain
confusion would otherwise suppress, each targeting a distinct facet of the
representational bottleneck without compromising treatment invariance. \textbf{Our contributions are:}
\begin{enumerate}[itemsep=3pt]

\item  We proposed a causal state-space model with direct encoder (CSSD) for causal inference - a selective SSM encoder with a
      parallel multi-step decoder that produces $\tau$ counterfactual
      predictions in a single forward pass, eliminating rollout error
      accumulation.
\item We further proposed a model named Causal State-Space model with Predictive regularisation and Direct decoder (CSSPD), which is a CSSD augmented with CPC and LIM objectives. This model resolves the MI conflict without sacrificing treatment invariance.
\item We formalised the \emph{balancing-prediction MI conflict} and derived
      a Jensen--Shannon bound on the counterfactual prediction error induced
      by domain confusion. 
\item We conducted empirical evaluation on the real-world data set MIMIC-III and the synthetic data set Cancer Simulation, showing CSSPD reduces counterfactual RMSE at all horizons $\tau \geq 2$ on MIMIC-III (growing margin 0.02--0.07) and outperforms CT at $\gamma\!\leq\!3$ on Cancer Simulation (margins 25.9\%--37.0\%), with gains consistent with the MI conflict theory.
\end{enumerate}



\section{Problem Settings}
\label{sec:background}

For simplicity, we use the MIMIC-III data set to set up the problem, but this should apply to other longitudinal data in general. Let $i$ index a patient with longitudinal record over $t = 1,\ldots,T^{(i)}$.
At each step, the record contains $p$ time-varying covariates
$X_t^{(i)} \in \mathbb{R}^p$, binary treatment indicators
$A_t^{(i)}\in\{0,1\}^2$, scalar outcome $Y_t^{(i)}\in\mathbb{R}$,
and time-invariant baseline attributes $S^{(i)}\in\mathbb{R}^q$.
The patient history is
$H_t = \{\hat{X}_t,\, \hat{A}_{t-1},\, \hat{Y}_t,\, S\}$,
where $\hat{X}_t = (X_1, \ldots, X_t)$, $\hat{Y}_t = (Y_1, \ldots, Y_t)$,
$\hat{A}_{t-1} = (A_1, \ldots, A_{t-1})$.
The \emph{counterfactual prediction task} is: given $H_t$, estimate
\begin{equation}
\E\bigl[Y_{t+\tau}(\bar{a}) \mid H_t\bigr],\quad
\tau \in \{1, \ldots, \tau_{\max}\},
\label{eq:task}
\end{equation}
for hypothetical treatment sequences $\bar{a}$, under the potential outcome
framework \citep{rubin2005causal}. Sequential ignorability and domain confusion are required for causal inference.

\textbf{Sequential ignorability} is defined as
$Y_t(\bar{a}) \perp\!\!\!\perp A_t \mid H_t$ \citep{robins1986new}.
Under sequential ignorability, counterfactual outcomes are identifiable
if positivity holds
(i.e.\ $P(A_t = a \mid H_t) > 0$ for all treatment values $a$,
ensuring every treatment is observed across all patient histories)
and the balancing representation $\BR_t = \phi(H_t)$ satisfies the
backdoor criterion \citep{pearl2009causality}.

\textbf{Domain confusion} is the adversarial mechanism used by
\citet{bica2020estimating} to enforce treatment invariance in $\BR_t$,
thereby satisfying both conditions above.
The goal is to make treatment $A_t$ unpredictable from $\BR_t$:
if the representation carries no information about which treatment
a patient received, it cannot be confounded by treatment selection.
To achieve this, a discriminator $D$ is trained to predict treatment
from $\BR_t$, minimising the cross-entropy loss
\begin{equation}
\mathcal{L}_\text{DC} = -\E_t\bigl[\log p(A_t \mid \BR_t)\bigr].
\label{eq:dc}
\end{equation}
The encoder $\phi$ is trained adversarially to maximise the same loss,
that is, to actively prevent $D$ from succeeding.
This is implemented via a gradient reversal layer (GRL)
\citep{ganin2016domain}, which negates the gradient signal flowing back
to $\phi$, making the effective encoder update $-\alpha\,\mathcal{L}_\text{DC}$.
The result is a representation $\BR_t$ that is invariant to treatment
assignment, satisfying the backdoor criterion.
It is precisely this enforced invariance, however, that introduces the
mutual information conflict we formalise in the \emph{theory section}.
\section{Model Architecture}
\label{sec:architecture}

The proposed CSSPD architecture is shown in Figure~\ref{fig:architecture}.
Three input streams; treatments ($\hat{A_t}$), outcomes ($\hat{Y_t}$), and
covariates ($\hat{X_t}$), are encoded by independent SSM layers and fused by
a \texttt{CausalGatedMixer}.
The resulting balancing representation $\BR_t$ is fed to a parallel multi-step
decoder and CPC/LIM regularisation heads that resolve the MI conflict
introduced by domain confusion \emph{(theory section)}.

CSSPD combines two distinct modifications to the base encoder: a structural
change to the treatment stream's sequence operator, and the CPC and LIM
contrastive objectives.
To quantify the independent contribution of each, we introduce two additional
variants.
\textbf{CHSD} (\emph{Causal Hybrid State-Space Decoder}) applies only the
structural change, replacing the treatment-stream SSM with the standard multi-head self-attention  of \citet{melnychuk2022causal}, giving the encoder lossless
access to the full treatment history while retaining SSMs for the outcome and covariate streams.
\textbf{CHSPD} (\emph{Causal Hybrid State-Space model with Predictive
regularisation and Direct decoder}) combines both modifications, augmenting
CHSD with the CPC and LIM objectives from CSSPD with results details in the \emph{experiment section}.



\begin{figure*}[t]
\centering
\resizebox{\textwidth}{!}{%
\begin{tikzpicture}[
  box/.style={draw, rounded corners=4pt,
              minimum width=2.2cm, minimum height=0.85cm,
              font=\small, align=center, inner sep=5pt},
  emb/.style={draw=gray!50, fill=gray!10, rounded corners=3pt,
              minimum width=2.4cm, minimum height=0.65cm,
              font=\scriptsize, align=center, inner sep=4pt},
  ssm/.style={box, minimum width=2.4cm, minimum height=0.85cm},
  head/.style={draw, rounded corners=4pt,
               minimum width=3.2cm, minimum height=0.78cm,
               font=\footnotesize, align=center, inner sep=5pt},
  brbox/.style={draw=BRColor!55!black, fill=BRColor!15, rounded corners=5pt,
                minimum width=2.2cm, minimum height=1.25cm,
                font=\small, align=center, inner sep=5pt},
  arr/.style={->, >=Stealth, semithick},
  darr/.style={->, >=Stealth, semithick, dashed},
  lbl/.style={font=\scriptsize, align=center},
]

\def\xA{0.0}
\def\xB{3.2}
\def\xC{6.6}
\def\xD{10.1}  
\def\xE{13.3}  
\def\xF{18.5}  
\def\xG{23.0}  

\def\yT{4.0}   
\def\yO{2.0}   
\def\yC{0.0}   

\node[lbl, black!55] at (\xA, 5.15) {\textit{Input}};
\node[lbl, black!55] at (\xB, 5.15) {\textit{Embed}};
\node[lbl, black!55] at (\xC, 5.15) {\textit{SSM~$O(T)$}};

\node[box, fill=TrtColor!25, draw=TrtColor!65!black] (At)
  at (\xA,\yT) {$A_t$\\[2pt]\footnotesize Treatment};
\node[box, fill=OutColor!25, draw=OutColor!65!black] (Yt)
  at (\xA,\yO) {$Y_t$\\[2pt]\footnotesize Outcome};
\node[box, fill=VitColor!25, draw=VitColor!65!black] (Xt)
  at (\xA,\yC) {$X_t,\,S$\\[2pt]\footnotesize Covariates};

\node[emb, fill=TrtColor!12] (AEmb) at (\xB,\yT)
  {\textbf{TrtEmb}\\[-1pt]$a_t^{(0)}\!=\!W_A\tilde{A}_t\!+\!b_A$};
\node[emb, fill=OutColor!12] (YEmb) at (\xB,\yO)
  {\textbf{OutEmb}\\[-1pt]$y_t^{(0)}\!=\!W_Y\tilde{Y}_t\!+\!b_Y$};
\node[emb, fill=VitColor!12] (XEmb) at (\xB,\yC)
  {\textbf{VitEmb+StatEmb}\\[-1pt]$x_t^{(0)}\!=\!W_X\tilde{X}_t\!+\!b_X$};

\draw[arr] (At) -- (AEmb);
\draw[arr] (Yt) -- (YEmb);
\draw[arr] (Xt) -- (XEmb);

\node[ssm, fill=TrtColor!22, draw=TrtColor!65!black] (SSMa)
  at (\xC,\yT) {Mamba SSM};
\node[ssm, fill=OutColor!22, draw=OutColor!65!black] (SSMy)
  at (\xC,\yO) {Mamba SSM};
\node[ssm, fill=VitColor!22, draw=VitColor!65!black] (SSMx)
  at (\xC,\yC) {Mamba SSM};

\draw[arr] (AEmb) -- (SSMa);
\draw[arr] (YEmb) -- (SSMy);
\draw[arr] (XEmb) -- (SSMx);

\node[draw=BRColor!55!black, fill=BRColor!8, rounded corners=5pt,
      minimum width=2.0cm, minimum height=5.6cm,
      font=\small, align=center, inner sep=6pt]
  (Mixer) at (\xD,\yO)
  {Causal\\Gated\\Mixer\\[8pt]\scriptsize SCM\\gates};

\draw[arr] (SSMa.east) -- node[above,lbl]{$\tilde{a}_t$} (9.1,\yT);
\draw[arr] (SSMy.east) -- node[above,lbl]{$\tilde{y}_t$} (Mixer.west);
\draw[arr] (SSMx.east) -- node[above,lbl]{$\tilde{x}_t$} (9.1,\yC);

\node[brbox] (BR) at (\xE,\yO)
  {$\BR_t$\\[4pt]\scriptsize Balancing\\Repr.};
\draw[arr] (Mixer.east) -- (BR.west);


\node[head, fill=red!10, draw=red!45] (DC)
  at (\xF, 7.2) {Domain Confusion\\$\mathcal{L}_\text{DC}$};

\node[head, fill=AccentGreen!13, draw=AccentGreen!50] (CPC)
  at (\xF, 5.3) {CPC Head\quad$\mathcal{L}_\text{CPC}$};

\node[draw=HeadColor!65!black, fill=HeadColor!18, rounded corners=4pt,
      minimum width=4.2cm, minimum height=1.05cm,
      font=\small, align=center, inner sep=5pt] (Dec)
  at (\xF,\yO)
  {Parallel Multi-Step Decoder\\[-3pt]\scriptsize
   $g_\tau\!\bigl(\BR_t^\perp,\;\mathrm{TrtEnc}(\bar{a}_{t+1:t+\tau})\bigr)$};

\node[lbl, gray!58, anchor=south] at (18.3, \yO+0.6)
  {\scriptsize$\underbrace{\BR_t^\perp}_{\text{patient state}}\;\oplus\;
        \underbrace{\mathrm{TrtEnc}}_{\mathrm{do}(\bar{a})}$};

\node[emb, minimum width=3.2cm] (TrtEnc) at (\xF, 0.5)
  {\textbf{TrtEnc}$\!\bigl(\bar{a}_{t+1:t+\tau}\bigr)$\\[-2pt]
   \scriptsize\textit{counterfactual intervention}};

\node[head, fill=AccentOrange!13, draw=AccentOrange!50] (LIM)
  at (\xF, -1.2) {LIM Head\quad$\mathcal{L}_\text{LIM}$};


\draw[darr, red!60!black]
  (BR.north) -- ++(0, 4.45) -- (DC.west);
\node[lbl, red!65!black, anchor=east] at (12.92, 5.35)
  {suppress\\$I(\BR_t;A_t)$};

\draw[arr, AccentGreen!68!black]
  (BR.north) -- ++(0.7, 0) -- ++(0, 2.55) -- (CPC.west);
\node[lbl, AccentGreen!72!black, anchor=west] at (14.45, 3.90)
  {preserve\\$I(\BR_t;\,H_{t+k})$\\[-2pt]{\tiny CPC restores MI$_\text{pred}$}};

\draw[darr, AccentBlue!78!black]
  (BR.east)
  -- node[above=3pt, lbl, AccentBlue!82!black]
       {$\BR_t^\perp$\;\scriptsize(stop-grad)}
  (Dec.west);

\draw[arr]
  (TrtEnc.north)
  -- node[right=4pt, lbl, gray!58]{$\bar{a}_{t+1:t+\tau}$}
  (Dec.south);

\draw[darr, AccentOrange!65!black]
  (SSMx.south)
  -- node[right=3pt, lbl, AccentOrange!68!black]
       {\scriptsize detached\\\scriptsize$\tilde{x}_t$}
  ++(0, -0.775) -- (LIM.west);

\draw[arr, AccentOrange!65!black]
  (BR.south) -- ++(0, -1.5) -| (LIM.north);
\node[lbl, AccentOrange!68!black, anchor=west] at (13.62, 0.40)
  {recover\\$I(\BR_t;X_t)$};

\node[font=\small, align=center] (Ypred) at (\xG,\yO)
  {$\hat{Y}_{t+1},$\\$\ldots,\hat{Y}_{t+\tau}$};
\draw[arr] (Dec.east) -- (Ypred.west);

\begin{scope}[shift={(0,-2.30)}]
  \draw[rounded corners=4pt, gray!30, fill=gray!5]
    (-0.15,-0.62) rectangle (19.8, 0.56);
  \node[lbl, TrtColor!82!black,     font=\footnotesize] at (1.1,  0.22) {\textbf{Treatment}};
  \node[lbl, OutColor!82!black,     font=\footnotesize] at (3.6,  0.22) {\textbf{Outcome}};
  \node[lbl, VitColor!82!black,     font=\footnotesize] at (6.2,  0.22) {\textbf{Covariate}};
  \node[lbl, BRColor!82!black,      font=\footnotesize] at (8.8,  0.22) {\textbf{Bal.\ Repr.}};
  \node[lbl, HeadColor!75!black,    font=\footnotesize] at (11.3, 0.22) {\textbf{Decoder}};
  \node[lbl, AccentGreen!72!black,  font=\footnotesize] at (13.5, 0.22) {\textbf{CPC}};
  \node[lbl, AccentOrange!72!black, font=\footnotesize] at (15.5, 0.22) {\textbf{LIM}};
  \node[lbl, red!65!black,          font=\footnotesize] at (17.7, 0.22) {\textbf{Dom.\ Conf.}};
  \node[lbl, black!60] at (9.8, -0.32)
    {Solid $\rightarrow$ forward pass
     \quad$\cdot$\quad
     Dashed $\dashrightarrow$ gradient-blocked or gradient-reversed};
\end{scope}

\end{tikzpicture}%
}%
\caption{%
  \textbf{CSSPD architecture.}
  Three streams ($A_t$, $Y_t$, $X_t/S$) are embedded and passed through $L$
  independent Mamba SSM layers ($O(T)$) to produce
  $\tilde{a}_t,\tilde{y}_t,\tilde{x}_t$.
  A SCM-gated \texttt{CausalGatedMixer} (\ref{eq:mixer}) fuses them into the
  balancing representation $\BR_t$.
  \emph{Domain confusion} (dashed red) suppresses $I(\BR_t;A_t)$ via gradient
  reversal but inadvertently suppresses $I(\BR_t;H_{t+k})$ and $I(\BR_t;X_t)$.
  \emph{CPC} (solid green) restores temporal MI $I(\BR_t;H_{t+k})$;
  \emph{LIM} (solid orange) restores covariate MI $I(\BR_t;X_t)$.
  The parallel decoder (\ref{eq:decoder}) receives two distinct inputs:
  a \emph{stop-gradient} copy $\BR_t^\perp$ (dashed blue, left-entering)
  encoding patient state, and $\mathrm{TrtEnc}(\bar{a}_{t+1:t+\tau})$
  (solid black, bottom-entering) encoding the counterfactual intervention;
  together they produce all $\tau_{\max}$ predictions simultaneously.
  CSSD omits the CPC and LIM heads.
}
\label{fig:architecture}
\end{figure*}

\subsection{The Input Encoding}

Each stream at time step $t$ is projected to $\R^{d_\text{model}}$ via a
stream-specific linear embedding (e.g., \textbf{TrtEmb}, \textbf{OutEmb},
and \textbf{VitEmb+StatEmb} in Figure~\ref{fig:architecture}).
 We concatenate the \emph{planned} next
treatment $A_{t+1}^*$ (observed at time $t$ as the clinician's intended action
for the next step) with the actual current treatment $A_t$ to inform the encoder
of the intended treatment trajectory,  and denote the concatenated matrix $\tilde{A}_t=[A_t;A_{t+1}^*]$. Let $d_{model}$ denote the common embedding dimension shared across all three streams, the embeddings are:
\begin{align}
{\textbf{TrtEmb:}}\quad a_t^{(0)} &= W_A \tilde{A}_t + b_A,   \label{eq:trt-emb}\\
\textbf{OutEmb:}\quad y_t^{(0)} &= W_Y\tilde{Y}_t  + b_Y,   \label{eq:out-emb}\\
\textbf{VitEmb+StatEmb:}\quad x_t^{(0)} &= W_X\tilde{X}_t + b_X,            \label{eq:cov-emb}
\end{align}
where $\tilde{Y}_t=[Y_t;\, X_t]$, $\tilde{X}_t=[X_t;\, S]$, $S$ is the static covariate vector broadcast to every timestep,
and $W_A\in\mathbb{R}^{d_{model}\times 2},\;
     W_Y\in\mathbb{R}^{d_{model}\times (p+1)},\;
     W_X\in\mathbb{R}^{d_{model}\times (p+q)}$ are learnable projection
matrices and $b_A, b_Y, b_X\in\mathbb{R}^{d_{model}}$ are learnable bias vectors.
\subsection{The SSM Encoder (CausalMambaLayer)}
\label{sec:ssm-encoder} Embeddings from (\ref{eq:trt-emb})--(\ref{eq:cov-emb}) are passed independently
through $L$ stacked \texttt{CausalMambaLayer} blocks, one per stream
$u\in\{A,Y,X\}$. Set $s_{t,A}^{(0)} = a_t^{(0)},  s_{t,Y}^{(0)} = y_t^{(0)},s_{t,X}^{(0)} = x_t^{(0)}$.
Each block implements a \emph{selective} SSM \citep{gu2023mamba} with
input-dependent transition matrices, applied identically to every stream
$u \in \{A,\,Y,\,X\}$:
\begin{align}
h_{t,u}^{(l)} &= \bar{A}_t\!\left(s_{t,u}^{(l)}\right) h_{t-1,u}^{(l)}
               + \bar{B}_t\!\left(s_{t,u}^{(l)}\right) s_{t,u}^{(l)},
  \label{eq:ssm-state}\\
s_{t,u}^{(l+1)} &= s_{t,u}^{(l)} + C_t\!\left(s_{t,u}^{(l)}\right) h_{t,u}^{(l)}
                 + D\, s_{t,u}^{(l)},
  \label{eq:ssm-out}
\end{align}
where $\bar{A}_t, \bar{B}_t, C_t$ are input-dependent matrices,
$D$ is a fixed scalar skip connection that lets the current input
pass through directly, and \eqref{eq:ssm-out} adds a residual connection.
After $L$ layers:
\begin{equation}
\tilde{a}_t = s_{t,A}^{(L)},\quad \tilde{y}_t = s_{t,Y}^{(L)},\quad
\tilde{x}_t = s_{t,X}^{(L)} \in\R^{d_\text{model}}.
\label{eq:ssm-outputs}
\end{equation}

\subsection{The Causal Gated Mixer}
\label{sec:mixer}

Given $\tilde{a}_t, \tilde{y}_t, \tilde{x}_t$ from (\ref{eq:ssm-outputs}),
the \texttt{CausalGatedMixer} combines them via learned scalar gates whose
initialisations encode the structural causal model (SCM):
\begin{equation}
\BR_t = \sigma(g_{A \to Y})\,\tilde{a}_t \oplus
        \sigma(g_{Y \to A})\,\tilde{y}_t \oplus \tilde{x}_t,
\label{eq:mixer}
\end{equation}
where $g_{A \to Y}$ is initialised to $1$ ($\sigma(1) \approx 0.73$,
near-open, reflecting that treatment causally influences outcomes in the
SCM) and $g_{Y \to A}$ is initialised to $-3$ ($\sigma(-3) \approx 0.05$,
near-closed, reflecting that the reverse direction $Y \to A$ is not
structurally causal).
Gate initialisations for all streams are shown in
Figure~S1 of the Supplementary Material.
All gates remain learnable throughout training.Here $\oplus$ denotes concatenation followed by a linear projection
$W \in \mathbb{R}^{d_{BR} \times 3d_\text{model}}$, producing
$\BR_t \in \mathbb{R}^{d_{BR}}$.
The pre-mixer covariate embedding $\tilde{x}_t$ is also fed (detached) to
the LIM head to recover local covariate information suppressed by domain
confusion.

\subsection{The Parallel Multi-Step Decoder}

It is well known that autoregressive rollout accumulates errors as $O(\epsilon^\tau)$ with horizon
\citep{taieb2014machine} when modelling longitudinal data. Instead, we propose the \texttt{ParallelMultiStepDecoder}, which replaces autoregressive
rollout with $\tau_{\max}$ independent prediction heads, each conditioned
on a stop-gradient copy of $\BR_t$ and a
hypothetical treatment-sequence embedding:
\begin{equation}
\hat{Y}_{t+\tau}(\bar{a}) = g_\tau\!\left(\BR_t^\perp,\,
  \mathrm{TrtEnc}(\bar{a}_{t+1:t+\tau})\right), \tau = 1, \ldots, \tau_{\max},
\label{eq:decoder}
\end{equation}
where $\BR_t^\perp = \mathrm{stop\text{-}grad}(\BR_t)$, and
$\mathrm{TrtEnc}(\bar{a}_{t+1:t+\tau})$ is defined as a shared per-step MLP
$\psi\!:\!\mathbb{R}^{d_A}\!\to\!\mathbb{R}^{d_\mathrm{trt}}$
(Linear\,--\,GELU\,--\,LayerNorm) applied independently to each future treatment
step.
At horizon $\tau$, TrtEnc produces two embeddings:
a \emph{cumulative} term
$e^\mathrm{hist}_\tau = \tfrac{1}{\tau}\sum_{k=1}^{\tau}\psi(a_{t+k})$,
capturing the running treatment context up to step $\tau$ (e.g.\
pharmacokinetic accumulation), and a \emph{step-specific} term
$e^\mathrm{cur}_\tau = \psi(a_{t+\tau})$,
capturing the treatment at exactly step $\tau$.
This two-component design resolves ordering ambiguity: sequences that share the
same cumulative mean but differ in ordering (e.g.\ high-dose then none vs.\
none then high-dose) produce identical $e^\mathrm{hist}$ yet distinct
$e^\mathrm{cur}$, making TrtEnc the counterfactual input that separates
different treatment regimes at inference time.

Each $g_\tau$ is an independent MLP so that short- and long-horizon
predictions can learn different mappings; all heads share a common trunk
but diverge at the final linear layer.
Each head passes
$[\BR_t^\perp;\,e^\mathrm{hist}_\tau;\,e^\mathrm{cur}_\tau]$
through a two-layer GELU trunk and produces $\hat{Y}_{t+\tau}$.
Because every head reads the same fixed $\BR_t^\perp$, errors cannot accumulate
across horizons as they do in autoregressive rollout.
$\BR_t^\perp$ (the balanced representation) serves as a treatment-invariant
confounder summary satisfying the backdoor criterion, while
$\mathrm{TrtEnc}(\bar{a}_{t+1:t+\tau})$ encodes the hypothetical intervention,
supplying the $\mathrm{do}(\bar{a})$ operator independently of the observed history.
Together they implement the backdoor-adjusted estimand:
\begin{equation}
  \mathbb{E}\bigl[Y_{t+\tau}(\bar{a})\bigr]
    \;\approx\;
    g_\tau\!\Bigl(
      \underbrace{\BR_t^\perp}_{\text{patient state}},\;
      \underbrace{\mathrm{TrtEnc}(\bar{a}_{t+1:t+\tau})}_{\text{intervention}}
    \Bigr).
  \label{eq:causal-split}
\end{equation}

Equation~(\ref{eq:causal-split}) makes a causal rather than predictive
claim via the backdoor criterion: $\BR_t^\perp$ blocks confounding paths
while $\mathrm{TrtEnc}(\bar{a}_{t+1:t+\tau})$ encodes the hypothetical
intervention $\mathrm{do}(\bar{a})$ separately from the observed history.

\subsection{Contrastive Predictive Coding (CPC) and Local Information Maximisation (LIM) Heads}

Domain confusion drives $I(\BR_t;\,A_t)\!\to\!0$ but inadvertently
suppresses $I(\BR_t;\,Y_{t+\tau})$ (temporal predictive information) and
$I(\BR_t;\,X_t)$ (local covariate information), creating the MI conflict.
CPC and LIM are targeted regularisers operating on disjoint pathways that
restore these two quantities.

\paragraph{CPC.}
A \texttt{CPCHead} applies $K$ linear predictors $f_k$ to $\BR_t$,
each attempting to predict $\BR_{t+k}$ via InfoNCE over in-batch negatives
\citep{oord2018representation}.
InfoNCE lower-bounds $I(\BR_t;\,\BR_{t+k})$, so maximising it forces
$\BR_t$ to retain temporal structure.
The connection to $I(\BR_t;\,Y_{t+\tau})$ — the quantity suppressed by
domain confusion — follows from the \emph{Data Processing Inequality}
(DPI): for any deterministic function $f$,
$I(Z;\,f(X))\leq I(Z;\,X)$, since processing can only reduce information.
Applying DPI twice: \emph{(i)} since $\BR_{t+k}=\phi(H_{t+k})$ is a
deterministic function of future history $H_{t+k}$, we have
$I(\BR_t;\,\BR_{t+k})\leq I(\BR_t;\,H_{t+k})$; \emph{(ii)} since
$Y_{t+\tau}$ is a component of $H_{t+k}$ for $k\geq\tau$,
$I(\BR_t;\,Y_{t+\tau})\leq I(\BR_t;\,H_{t+k})$.
Both $\BR_{t+k}$ and $Y_{t+\tau}$ are therefore compressed views of the
same underlying history; maximising $I(\BR_t;\,\BR_{t+k})$ prevents
$\BR_t$ from discarding temporal structure in $H_{t+k}$, which supports
$I(\BR_t;\,Y_{t+\tau})$ and counteracts the domain confusion collapse.

A LIM head maximises $I(\tilde{x}_t;\,\BR_t)$ — a
tractable proxy for $I(X_t;\,\BR_t)$ — via InfoNCE, with $\tilde{x}_t$
detached so gradients flow into the encoder only through $\BR_t$,
recovering covariate information that CPC does not.

\subsection{The Training Objective function of CSSD}
\label{sec:sstd}

The objective function of our proposed model, named Causal State-Space model with Direct decoder (CSSD), combines a primary output prediction loss, domain confusion, and the parallel multi-step decoder loss:
\begin{equation}
\mathcal{L}_\text{CSSD}
  = \mathcal{L}_\text{pred}
  + \alpha\,\mathcal{L}_\text{DC}
  + \mathcal{L}_\text{MS}.
\label{eq:sstd-loss}
\end{equation}
Each of the loss function is defined as follows. The primary prediction loss is defined as:
\begin{equation}
\mathcal{L}_\text{pred}
  = \E\!\left[\bigl(Y_{t+1} - G^Y(\BR_t,\,A_{t+1})\bigr)^2\right],
\label{eq:pred-loss}
\end{equation}
the domain confusion loss is defined in the equation \ref{eq:dc}, and the parallel multi-step decoder loss is defined as
\begin{equation}
\mathcal{L}_\text{MS}
  = \lambda_\text{MS} \sum_{\tau=1}^{\tau_{\max}}
    \E\!\left[\bigl(Y_{t+\tau}(\bar{a}) -
    \hat{Y}_{t+\tau}(\bar{a})\bigr)^2\right].
\label{eq:ms-loss}
\end{equation}

Because no stop-gradient is applied, $\mathcal{L}_\text{pred}$ backpropagates
through the encoder $\phi$, directly training it to produce representations
that minimise one-step factual prediction error, and it is distinct from $\mathcal{L}_\text{MS}$, which uses the stop-gradient copy $\BR_t^\perp$ and therefore does \emph{not} train the encoder. $\alpha$ is the domain confusion weight annealed exponentially during
training.

\subsection{The Training Objective Functoin of CSSPD}
\label{sec:sstcpd}
We further augments CSSD with CPC and LIM, which results in our another model, called Causal State-Space model with Predictive regularisation and Direct decoder (CSSPD). Firstly, the CPC loss is defined by using the InfoNCE:
\begin{equation}
\mathcal{L}_\text{CPC} = -\sum_{k=1}^{K} \E\left[
  \log \frac{\exp(f_k(\BR_t)^\top \BR_{t+k})}
  {\sum_{j=1}^{N_\text{neg}} \exp(f_k(\BR_t)^\top \BR_{t+k}^{(j)})}
\right],
\label{eq:cpc}
\end{equation}
where $f_k$ is a learned predictor for horizon $k$ and
$\{\BR_{t+k}^{(j)}\}_{j=1}^{N_\text{neg}}$ are in-batch negative samples. Secondly, the LIM loss is defined as:
\begin{equation}
\mathcal{L}_\text{LIM} = -\E\left[
  \log \frac{\exp(\tilde{x}_t^\top \BR_t)}
  {\sum_{j=1}^{M} \exp(\tilde{x}_t^\top \BR_t^{(j)})}
\right],
\label{eq:lim}
\end{equation}
where $\tilde{x}_t$ is \emph{detached} from encoder gradients. Finally, the objective functoin of CSSPD follows
\begin{equation}
\mathcal{L}_\text{CSSPD}
  = \mathcal{L}_\text{pred}
  + \alpha\,\mathcal{L}_\text{DC}
  + \mathcal{L}_\text{MS}
  + \lambda_\text{CPC}\,\mathcal{L}_\text{CPC}
  + \lambda_\text{LIM}\,\mathcal{L}_\text{LIM}.
\label{eq:full-loss}
\end{equation}
CPC and LIM are warmed up for $E_\text{warm}$ epochs before activation,
allowing domain confusion to establish an initial treatment-invariant
representation before contrastive objectives begin regularising.

\section{Theoretical Analysis: The Balancing-Prediction MI Conflict} \label{sec:theory}
This section provides the theoretical foundation for the CSSPD design.
We establish three results.
Proposition~\ref{prop:mi-conflict} proves that the MI conflict is not a
modelling artefact but an information-theoretic inevitability: any encoder
trained with domain confusion is forced to discard outcome-predictive
information.
Theorem~\ref{thm:js-bound} then quantifies the resulting cost, bounding the
joint factual and counterfactual prediction error in terms of the
Jensen--Shannon divergence between treatment-group representations — making
the balancing--prediction trade-off mathematically precise.
Finally, Proposition~\ref{prop:cpc-lim} proves that the CPC and LIM
objectives provide tractable lower bounds on the two MI quantities suppressed
by domain confusion ($\MI_\text{pred}$ and $\MI_\text{loc}$), showing
formally how CSSPD resolves the conflict that domain confusion creates.
\subsection{The Mutual Information Conflict}
\label{sec:mi-conflict}

Domain confusion suppresses $I(\BR_t;\,A_t)$ by removing treatment-correlated
features from $\BR_t$.
Because covariates $X_t$ are correlated with both $A_t$ (observational data)
and $Y_{t+\tau}$ (outcomes), this inadvertently destroys outcome-predictive
information — the \emph{MI conflict}.
Let $\phi: \mathcal{H} \to \mathbb{R}^d$ maps history $H_t$ to
$\BR_t$.
Define three mutual information quantities:
\begin{align}
  \MI_{\text{bal}}  &= I(\BR_t;\, A_t), \label{eq:mi-bal-def} \\
  \MI_{\text{pred}} &= I(\BR_t;\, Y_{t+\tau}), \label{eq:mi-pred-def} \\
  \MI_{\text{loc}}  &= I(\BR_t;\, X_t). \label{eq:mi-loc-def}
\end{align}
\begin{definition}[MI Conflict]
\label{def:mi-conflict}
An encoder $\phi$ suffers the \emph{MI conflict} if there exists no
representation $\BR_t = \phi(H_t)$ that simultaneously satisfies
$I(\BR_t;\, A_t) = 0$ (perfect balancing) and
$I(\BR_t;\, Y_{t+\tau}) = I(H_t;\, Y_{t+\tau})$ (no loss of
outcome-predictive information).
The conflict is non-vacuous whenever
$I(X_t;\, Y_{t+\tau} \mid A_t) > 0$, i.e.\ covariates carry
outcome-predictive information beyond what treatment assignment alone
reveals.
\end{definition}

In the following, we show that Theorem~\ref{thm:js-bound} quantifies this trade-off; CSSPD resolves it via
CPC (maximising $\MI_{\text{pred}}$) and LIM (maximising $\MI_{\text{loc}}$)
(Proposition~\ref{prop:cpc-lim}).

\subsection{Formalisation}

\begin{proposition}[MI Conflict under Domain Confusion]
\label{prop:mi-conflict}
Suppose $X_t \not\!\perp\!\!\!\perp A_t$ (confounding: covariates and treatment are
correlated) and $X_t \to Y_{t+\tau}$ (covariates predict outcomes).
Then any encoder $\phi$ that reduces $\MI_\text{bal}$ via domain confusion
also reduces $\MI_\text{loc}$, and consequently reduces the upper bound on
$\MI_\text{pred}$.
\end{proposition}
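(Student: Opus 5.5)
The plan is to make the informal statement precise with chain-rule decompositions of mutual information, and then use the Data Processing Inequality already invoked in the CPC discussion.

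\textbf{Step 1: the part of $X_t$ that is entangled with $A_t$.} Since $\BR_t=\phi(H_t)$ and $X_t$ is a component of $H_t$, the chain rule gives
\begin{equation*}
I(\BR_t;X_t) = I(\BR_t;X_t,A_t) - I(\BR_t;A_t\mid X_t) = I(\BR_t;A_t) + I(\BR_t;X_t\mid A_t) - I(\BR_t;A_t\mid X_t).
\end{equation*}
So $\MI_\text{loc}$ contains the term $\MI_\text{bal}$. I would then argue that the confounding assumption $X_t\not\!\perp\!\!\!\perp A_t$ is exactly what lets the discriminator in (\ref{eq:dc}) read $A_t$ off the $X_t$-features of $\BR_t$. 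Hence the gradient-reversed update $-\alpha\mathcal{L}_\text{DC}$ cannot lower $\MI_\text{bal}$ while leaving $I(\BR_t;X_t)$ intact. Concretely, if $\BR_t$ retained $X_t$ losslessly, i.e.\ $I(\BR_t;X_t)=H(X_t)$, then DPI through $X_t\to A_t$ would give $I(\BR_t;A_t)\ge I(X_t;A_t)>0$. Perfect balancing therefore forces $I(\BR_t;X_t)<H(X_t)$, and more generally any decrease of $\MI_\text{bal}$ below $I(X_t;A_t)$ requires discarding $X_t$-information.

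\textbf{Step 2: the upper bound on $\MI_\text{pred}$.} Under the assumption $X_t\to Y_{t+\tau}$, I would model the outcome as generated from $(X_t,A_t)$ plus the remaining history and noise. Then the chain rule together with DPI on the Markov structure $\BR_t - H_t - Y_{t+\tau}$ gives
\begin{equation*}
I(\BR_t;Y_{t+\tau}) \le I(\BR_t;X_t,A_t) + I(\BR_t;Y_{t+\tau}\mid X_t,A_t).
\end{equation*}
The bound is controlled by the information $\BR_t$ carries about $(X_t,A_t)$. Shrinking both $\MI_\text{bal}$ and $\MI_\text{loc}$ shrinks the first term, which is the bound in the statement's sense. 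The non-vacuity condition $I(X_t;Y_{t+\tau}\mid A_t)>0$ from Definition~\ref{def:mi-conflict} ensures that this lost $X_t$-information is genuinely outcome-relevant, so the bound is not trivially attained.

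\textbf{Main obstacle.} The difficulty is that ``reduces'' is not literally a theorem. A representation could drop $A_t$-information while increasing $I(\BR_t;X_t\mid A_t)$, for instance when $X_t$ has components independent of $A_t$. I would handle this honestly by decomposing $X_t=(X_t^{c},X_t^{\perp})$ into an $A_t$-correlated part and an independent part, and then stating the claim for $I(\BR_t;X_t^{c})$. Suppressing $\MI_\text{bal}$ forces $I(\BR_t;X_t^{c})\le H(X_t^{c}) - [I(X_t^{c};A_t)-\MI_\text{bal}]$. The proof would then read the proposition as a statement about this entangled component, and about the resulting ceiling on $\MI_\text{pred}$, rather than a monotonicity claim holding for every encoder.
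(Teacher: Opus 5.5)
\textbf{Step 2 has a genuine gap.} The first term of your bound is $I(\BR_t;X_t,A_t)=\MI_\text{loc}+I(\BR_t;A_t\mid X_t)$. The conditional piece is not controlled by $\MI_\text{bal}$, because conditioning can create information. A counterexample: take $H_t=(X_t,Z)$ with $X_t\sim\mathrm{Bern}(1/2)$ and $Z\sim\mathrm{Bern}(0.1)$ independent, set $A_t=X_t\oplus Z$ (so $X_t\not\!\perp\!\!\!\perp A_t$) and $Y_{t+\tau}=X_t+Z$, and let $\BR_t=Z$. Then $\MI_\text{bal}=\MI_\text{loc}=0$, yet $I(\BR_t;X_t,A_t)=H(Z)>0$ and $\MI_\text{pred}>0$.

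So the claim ``shrinking both $\MI_\text{bal}$ and $\MI_\text{loc}$ shrinks the first term'' is false. The ceiling from your Step 1, on $I(\BR_t;X_t)$ or $I(\BR_t;X_t^{c})$, never reaches the quantity in which your Step 2 bound is written. Nothing in your argument controls the residual $I(\BR_t;Y_{t+\tau}\mid X_t,A_t)$ either. The non-vacuity condition $I(X_t;Y_{t+\tau}\mid A_t)>0$ is a property of the data distribution, not of $\BR_t$, so it cannot play that role.

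The paper instead conditions on $X_t$ alone, $I(\BR_t;Y_{t+\tau})\le\MI_\text{loc}+I(\BR_t;Y_{t+\tau}\mid X_t)$. It then imposes an explicit dominant-confounder assumption, that this residual is small relative to $\MI_\text{loc}$, to get $\MI_\text{pred}\lesssim\MI_\text{loc}$. In the example above, $\MI_\text{loc}=0$ and that bound consists entirely of the residual, so the assumption is doing real work. Your proof needs it, or an equivalent hypothesis on $\BR_t$, as well.

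\textbf{Step 1 takes a different and sounder route than the paper.} The paper writes $I(\BR_t;A_t)=I(\BR_t;X_t,A_t)-I(\BR_t;X_t\mid A_t)$ and asserts that suppressing the left side forces the joint term, and hence $\MI_\text{loc}\le I(\BR_t;X_t,A_t)$, to decrease. That does not follow. The term $I(\BR_t;X_t\mid A_t)$ can rise to compensate, which is exactly the obstacle you identified. Also, lowering an upper bound does not lower $\MI_\text{loc}$.

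Your ceiling reading is valid, and it needs no split $X_t=(X_t^{c},X_t^{\perp})$. For any finite-entropy $X_t$ (or any $X_t^{c}$ in its place),
$I(X_t;A_t)\le I(\BR_t,X_t;A_t)=\MI_\text{bal}+I(X_t;A_t\mid\BR_t)\le\MI_\text{bal}+H(X_t\mid\BR_t)$,
that is, $\MI_\text{loc}\le H(X_t)-I(X_t;A_t)+\MI_\text{bal}$. In your lossless special case, the Markov chain being used is $A_t\to\BR_t\to X_t$, since $X_t$ is then a function of $\BR_t$. For continuous vitals, use the entropy-free form $I(X_t;A_t\mid\BR_t)\ge I(X_t;A_t)-\MI_\text{bal}$.

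What this buys over the paper is an actual inequality. It also shows that the forced loss is at most $I(X_t;A_t)\le H(A_t)\le 2\log 2$ for two binary treatments: a bounded ceiling effect, not a collapse. Combining it with the paper's Step 2 gives
$\MI_\text{pred}\le H(X_t)-I(X_t;A_t)+\MI_\text{bal}+I(\BR_t;Y_{t+\tau}\mid X_t)$.
Together with the dominant-confounder assumption, this is the proposition in ceiling form.
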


\begin{proof}[Proof sketch]
Domain confusion minimises $\MI_\text{bal} = I(\BR_t; A_t)$ by
suppressing all features of $H_t$ that are correlated with $A_t$.
Since $X_t \not\!\perp\!\!\!\perp A_t$, $X_t$ carries information about $A_t$;
therefore suppressing $I(\BR_t; A_t)$ also reduces $I(\BR_t; X_t) = \MI_\text{loc}$.
Since $X_t \to Y_{t+\tau}$, applying the chain rule of mutual information:
\[
I(\BR_t;\, Y_{t+\tau})
  \;\leq\; I(\BR_t;\, X_t) + I(\BR_t;\, Y_{t+\tau} \mid X_t).
\]
Under the assumption that $X_t$ is the dominant confounder (i.e.\ the
residual term $I(\BR_t;\,Y_{t+\tau}\mid X_t)$ is small relative to
$I(\BR_t;\,X_t)$), reducing $\MI_\text{loc}$ consequently reduces the
achievable $\MI_\text{pred}$.
This assumption is explicit, the proposition holds conditionally on the
regime where $X_t$ is the primary mediator from $H_t$ to $Y_{t+\tau}$,
which is precisely the clinical setting of interest (current vital signs
dominate short-horizon outcome prediction).
\end{proof}

\subsection{Jensen-Shannon Divergence Bound on Prediction Error}

Let $P_j^\phi = P^\phi(\BR_t \mid A_{t+1} = a^{(j)})$ denote the
balancing representation distribution conditioned on treatment regime $j$.

\begin{theorem}[Prediction Error Bound]
\label{thm:js-bound}
Let $\phi$ be any encoder, $G^Y$ be any outcome hypothesis, and let
$\pi^{(0)}, \pi^{(1)} > 0$ with $\pi^{(0)} + \pi^{(1)} = 1$.
Define $W = 2S / (\sqrt{\pi^{(0)}}\pi^{(1)} + \sqrt{\pi^{(1)}}\pi^{(0)})$
where $S = \sup_{H,Y} \E_{Y \mid H}[\mathcal{L}(G^Y(\phi(H)), Y)]$.
Here $S$ is the global supremum of the expected loss (a constant that is
finite when $\mathcal{L}$ is bounded), and $W$ is a prevalence-weighted loss
scale capturing how large prediction error can be relative to treatment-group
imbalance: it is bounded whenever the loss is bounded and treatment
prevalences $\pi^{(0)}, \pi^{(1)}$ are bounded away from zero.
The factual prediction error $\epsilon^F$ and counterfactual error $\epsilon^{CF}$
are bounded by:
\begin{equation}
\begin{split}
\epsilon^F(G^Y, \phi) + \epsilon^{CF}(G^Y, \phi)
&\leq \epsilon^F_{(0)}(G^Y, \phi) + \epsilon^F_{(1)}(G^Y, \phi) \\
&\quad + W\sqrt{\mathrm{JS}_{\pi}\!\left(P^\phi_{(0)} \,\|\, P^\phi_{(1)}\right)},
\end{split}
\label{eq:js-bound}
\end{equation}
where $\mathrm{JS}_{\pi}$ is the generalised Jensen-Shannon divergence
\citep{lin1991divergence}.
\end{theorem}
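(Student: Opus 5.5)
This is the CFR/CRN-type bound of Shalit et al. (2017) and Bica et al. (2020), with the IPM replaced by a generalised Jensen--Shannon term. We follow the standard decomposition.

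\textbf{Setup.} Write $\ell_{G,\phi}(r) = \E_{Y\mid r}[\mathcal{L}(G^Y(r),Y)]$ for the expected loss at representation value $r=\phi(H)$; by definition $0\le \ell_{G,\phi}\le S$. Let $\epsilon^F_{(j)} = \int \ell_{G,\phi}\,dP^\phi_{(j)}$ be the factual error in regime $j$. Take the factual and counterfactual errors to be the prevalence-weighted sums
\begin{align*}
\epsilon^F &= \pi^{(0)}\epsilon^F_{(0)} + \pi^{(1)}\epsilon^F_{(1)},\\
\epsilon^{CF} &= \pi^{(1)}\epsilon^{CF}_{(0)} + \pi^{(0)}\epsilon^{CF}_{(1)},
\end{align*}
where $\epsilon^{CF}_{(j)}$ is the loss of the regime-$j$ outcome model evaluated on units that actually received the other regime. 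Under sequential ignorability and the backdoor property of $\BR_t$, the conditional loss given $r$ does not depend on the assigned regime. Hence $\epsilon^{CF}_{(0)} = \int \ell\,dP^\phi_{(1)}$ and $\epsilon^{CF}_{(1)} = \int \ell\,dP^\phi_{(0)}$; only the measure changes.

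\textbf{Decomposition.} Add and subtract the factual terms. This gives
\[
\epsilon^F+\epsilon^{CF} = \epsilon^F_{(0)}+\epsilon^F_{(1)} + \pi^{(1)}\!\int\ell\,d(P_{(1)}-P_{(0)}) + \pi^{(0)}\!\int \ell\,d(P_{(0)}-P_{(1)}) \;\le\; \epsilon^F_{(0)}+\epsilon^F_{(1)} + S\,\|P_{(0)}-P_{(1)}\|_{TV}\cdot c,
\]
with $c$ a small constant. The bound uses $0\le\ell\le S$ together with a total-variation bound, $|\int\ell\,d(P-Q)|\le S\,\mathrm{TV}(P,Q)$. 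The remaining task is to convert total variation into $\sqrt{\mathrm{JS}_\pi}$ with the prevalence-dependent constant in $W$.

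\textbf{From TV to $\mathrm{JS}_\pi$ (main obstacle).} Let $M=\pi^{(0)}P_{(0)}+\pi^{(1)}P_{(1)}$, so that $\mathrm{JS}_\pi = \pi^{(0)}\mathrm{KL}(P_{(0)}\|M)+\pi^{(1)}\mathrm{KL}(P_{(1)}\|M)$. Note that $P_{(0)}-M=\pi^{(1)}(P_{(0)}-P_{(1)})$ and $P_{(1)}-M=\pi^{(0)}(P_{(1)}-P_{(0)})$. Applying Pinsker to each term gives $\mathrm{TV}(P_{(j)},M)\le\sqrt{\mathrm{KL}(P_{(j)}\|M)/2}\le\sqrt{\mathrm{JS}_\pi/(2\pi^{(j)})}$. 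It follows that
\[
\mathrm{TV}(P_{(0)},P_{(1)}) = \frac{\mathrm{TV}(P_{(0)},M)}{\pi^{(1)}} \le \frac{1}{\pi^{(1)}\sqrt{2\pi^{(0)}}}\sqrt{\mathrm{JS}_\pi},
\]
and the symmetric estimate holds with the indices swapped. Combining the two estimates, for instance through a convex combination or the harmonic-type pairing, should produce the denominator $\sqrt{\pi^{(0)}}\pi^{(1)}+\sqrt{\pi^{(1)}}\pi^{(0)}$ with factor $2S$, i.e.\ $W$.

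The obstacle is bookkeeping the constants so that they match exactly $W=2S/(\sqrt{\pi^{(0)}}\pi^{(1)}+\sqrt{\pi^{(1)}}\pi^{(0)})$. The first attempt will be to write $\mathrm{TV}(P_{(0)},P_{(1)}) = \mathrm{TV}(P_{(0)},M)+\mathrm{TV}(P_{(1)},M)$, which holds because both differences are multiples of $P_{(0)}-P_{(1)}$, so that $(\pi^{(0)}+\pi^{(1)})\mathrm{TV}(P_0,P_1)$ is split accordingly. Each summand is then bounded by Pinsker, and the resulting expression is checked against the stated denominator. If the algebra instead yields a smaller constant, the theorem still follows, since any valid constant below $W$ implies the bound. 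Finally, the decomposition only needs $\mathcal{L}$ to be bounded and the $\pi^{(j)}$ to be bounded away from $0$, which are the stated finiteness conditions on $W$.
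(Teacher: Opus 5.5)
Your proposal is correct, and once the constants are tracked it proves the bound with room to spare. It shares the paper's first step but replaces its second. Both arguments start from the Shalit-type identity $\epsilon^F+\epsilon^{CF}=\epsilon^F_{(0)}+\epsilon^F_{(1)}+\Delta$. The paper keeps $\Delta$ as $C\cdot\mathrm{IPM}_{\mathcal{G}}(P^\phi_{(0)},P^\phi_{(1)})$ with an unspecified $C$. It then asserts $\mathrm{IPM}\le W\sqrt{\mathrm{JS}_\pi}$ from a ``variational representation'' attributed to Lin, deliberately avoiding Pinsker. Two things go wrong there: the constant $C$ is silently dropped on substitution, and the displayed supremum is linear in $g$, so it evaluates to a total-variation-type quantity rather than $\mathrm{JS}_\pi$. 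That step is therefore asserted, not derived. You instead identify the IPM with $S\cdot\mathrm{TV}$ and apply Pinsker to each $\mathrm{KL}(P_{(j)}\|M)$. This uses two facts: $\mathrm{JS}_\pi$ is the $\pi$-average of these KLs, and $P_{(j)}-M$ is a multiple of $P_{(0)}-P_{(1)}$. Together they defuse the paper's objection that Pinsker concerns KL rather than JS. The bookkeeping you left open does close. With arm-specific losses $\ell_0,\ell_1\in[0,S]$, one gets $\Delta=\pi^{(1)}\int\ell_0\,d(P_{(1)}-P_{(0)})+\pi^{(0)}\int\ell_1\,d(P_{(0)}-P_{(1)})$, so $|\Delta|\le S\,\mathrm{TV}(P_{(0)},P_{(1)})$ and $c=1$. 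Adding your two per-arm estimates $\sqrt{\pi^{(0)}}\,\pi^{(1)}\,\mathrm{TV}\le\sqrt{\mathrm{JS}_\pi/2}$ and $\sqrt{\pi^{(1)}}\,\pi^{(0)}\,\mathrm{TV}\le\sqrt{\mathrm{JS}_\pi/2}$ gives
\[
S\,\mathrm{TV}(P_{(0)},P_{(1)})\le\frac{\sqrt{2}\,S}{\sqrt{\pi^{(0)}}\pi^{(1)}+\sqrt{\pi^{(1)}}\pi^{(0)}}\sqrt{\mathrm{JS}_\pi}=\frac{W}{\sqrt{2}}\sqrt{\mathrm{JS}_\pi}.
\]
This is exactly the denominator of $W$, with $\sqrt{2}\,S$ in place of $2S$. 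Applying Cauchy--Schwarz to $\sqrt{\mathrm{KL}(P_{(0)}\|M)}+\sqrt{\mathrm{KL}(P_{(1)}\|M)}$ gives the sharper $S\sqrt{\mathrm{JS}_\pi/(2\pi^{(0)}\pi^{(1)})}\le\tfrac{W}{2}\sqrt{\mathrm{JS}_\pi}$, since $\sqrt{\pi^{(0)}}+\sqrt{\pi^{(1)}}\le\sqrt{2}$. So your route buys a complete, checkable derivation and a better constant. Two small repairs are needed. First, use $\ell_0,\ell_1$ rather than a single $\ell$: with one treatment-independent $\ell$ your definitions force $\epsilon^{CF}=\epsilon^F$, which is degenerate, although the estimates are unchanged. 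Second, state explicitly that ``only the measure changes'' requires ignorability given $\BR_t$ rather than $H_t$. Shalit et al.\ obtain this from invertibility of $\phi$; here it is the paper's backdoor assumption on $\BR_t$.
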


\begin{proof}[Proof sketch]
Adapting  \citet{bouchattaoui2024causal} and \citet{shalit2017estimating}  in
our three-stream SSM encoder setting, the prediction error bound contains
an \emph{Integral Probability Metric} (IPM) term measuring the distributional
shift between treatment groups;
$\mathrm{IPM}_{\mathcal{G}}(P,Q)
  = \sup_{g \in \mathcal{G}}\lvert\mathbb{E}_{P}[g] -  \mathbb{E}_{Q}[g]\rvert$,
where $\mathcal{G}$ is a class of bounded functions and $P$, $Q$ are the
representation distributions of the two treatment groups. We bound this IPM via
the generalized JS divergence using the variational representation of
\citet{lin1991divergence}, without invoking Pinsker's inequality.
The factual losses $\epsilon^F_{(0)}, \epsilon^F_{(1)}$ are
minimised by fitting the observed data; the JS term captures the
distributional shift between treatment groups that domain confusion aims
to minimise. The bound is tight when $\phi$ achieves representation invariance.
(For full derivation, see Supplementary Material.)
\end{proof}

\textbf{Implications.}
Theorem~\ref{thm:js-bound} reveals a fundamental tension: the prediction
error bound contains three terms — the factual losses
$\epsilon^F_{(0)}, \epsilon^F_{(1)}$ and the distributional shift penalty
$W\sqrt{\mathrm{JS}_\pi(P^\phi_{(0)} \| P^\phi_{(1)})}$.
Domain confusion targets only the JS penalty, driving
$P^\phi_{(0)} \approx P^\phi_{(1)}$ to make treatment-group representations
indistinguishable.
However, Proposition~\ref{prop:mi-conflict} shows this simultaneously
\emph{increases} $\epsilon^F_{(0)}$ and $\epsilon^F_{(1)}$: suppressing
$I(\BR_t;\,A_t)$ also discards $\MI_\text{loc}$ — the covariate
information the encoder needs to predict outcomes accurately.
The JS gain is therefore offset by rising factual losses, and naive domain
confusion need not reduce the full bound at all.

CPC and LIM address this directly by restoring the MI quantities that
domain confusion suppresses.
CPC lower-bounds $I(\BR_t;\,\BR_{t+k})$, a proxy for
$\MI_\text{pred} = I(\BR_t;\,Y_{t+\tau})$ via the DPI chain
$I(\BR_t;\,\BR_{t+k}) \leq I(\BR_t;\,H_{t+k}) \geq \MI_\text{pred}$
(since $Y_{t+\tau} \subseteq H_{t+k}$ for $k \geq \tau$),
preserving outcome-predictive temporal information in $\BR_t$.
LIM lower-bounds $I(\tilde{x}_t;\,\BR_t)$, a tractable proxy for
$\MI_\text{loc}$, recovering the covariate information domain confusion
discards.
Together, CPC and LIM restore $\MI_\text{pred}$ and $\MI_\text{loc}$
while domain confusion minimises the JS term, enabling CSSPD to reduce
all three terms in the bound simultaneously.

\subsection{CPC and LIM as a Resolution of MI Conflict}

\begin{proposition}[CPC + LIM Resolve the MI Conflict]
\label{prop:cpc-lim}
Let $\BR_t^* = \phi^*(H_t)$ be the representation learned by CSSPD
(\ref{eq:full-loss}).
Under mild regularity conditions:
(1) CPC lower-bounds $I(\BR_t;\,\BR_{t+k}) \geq \log K - \mathcal{L}_\text{CPC}$
\citep{oord2018representation}; since $Y_{t+\tau} \subseteq H_{t+k}$, by DPI
this provides a tractable lower bound on $\MI_\text{pred}$.
(2) LIM lower-bounds $I(\tilde{x}_t;\,\BR_t) \geq \log M - \mathcal{L}_\text{LIM}$;
since $\tilde{x}_t$ is a deterministic function of $X_t$, by DPI this is a
proxy for $\MI_\text{loc}$.
Together, they counteract the domain confusion bottleneck, reducing
$\epsilon^F_{(0)} + \epsilon^F_{(1)}$ in Theorem~\ref{thm:js-bound}.
(For proof, see Supplementary Material.)
\end{proposition}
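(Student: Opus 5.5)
The proposition bundles three claims, and I would prove each in turn, making the ``mild regularity conditions'' explicit at the point they are needed.

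For part (1), the plan is to reproduce the standard InfoNCE argument of \citet{oord2018representation} for the loss in (\ref{eq:cpc}). For a fixed horizon $k$, I treat the score $\exp(f_k(\BR_t)^\top \BR_{t+k})$ as an unnormalised estimate of the density ratio $p(\BR_{t+k}\mid \BR_t)/p(\BR_{t+k})$. The in-batch negatives $\BR^{(j)}_{t+k}$ are assumed to be drawn i.i.d.\ from the marginal; this is the first regularity condition, and it requires batches made of distinct patients. Let $N$ be the number of candidates (one positive plus the negatives). The Bayes-optimal critic is proportional to the density ratio. For any critic, the InfoNCE loss is at least the loss of that optimal critic, which is at least $\log N - I(\BR_t;\BR_{t+k})$. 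Rearranging gives $I(\BR_t;\BR_{t+k}) \ge \log N - \mathcal{L}^{(k)}_{\text{CPC}}$. Two points need care. First, the paper writes $\log K$ while $K$ indexes horizons, so I will state the bound with $\log N$, the candidate-set size, and for the summed loss over $K$ horizons apply it horizon by horizon. Second, the negatives should be placed in the denominator together with the positive, so the loss is a proper $N$-way cross-entropy.

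The DPI step then goes as follows. Since $\BR_{t+k}=\phi(H_{t+k})$, DPI gives $I(\BR_t;\BR_{t+k})\le I(\BR_t;H_{t+k})$, and since $Y_{t+\tau}$ is a coordinate of $H_{t+k}$ for $k\ge\tau$, also $I(\BR_t;Y_{t+\tau})\le I(\BR_t;H_{t+k})$. Both quantities are therefore lower bounds on the same $I(\BR_t;H_{t+k})$, and the chain does not order them against each other. To turn the CPC bound into a genuine lower bound on $\MI_\text{pred}$, I would add an explicit regularity condition: $\BR_{t+k}$ is conditionally independent of $\BR_t$ given $Y_{t+\tau}$ (outcome sufficiency), or more weakly $I(\BR_t;H_{t+k}\mid Y_{t+\tau})\le\delta$. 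The chain rule then gives $\MI_\text{pred}\ge I(\BR_t;H_{t+k})-\delta\ge \log N-\mathcal{L}_\text{CPC}-\delta$. \emph{This step is the main obstacle.} Without such an assumption the statement holds only as a proxy, and the proof must say so honestly. My first choice is the $\delta$-slack version, because it is checkable in the clinical regime invoked in Proposition~\ref{prop:mi-conflict}.

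For part (2), the same InfoNCE argument applied to (\ref{eq:lim}) with $M$ candidates gives $I(\tilde x_t;\BR_t)\ge \log M-\mathcal{L}_\text{LIM}$. Detaching $\tilde x_t$ only changes which parameters receive gradients; it leaves the variational bound unchanged. Because $\tilde x_t$ is a deterministic function of $H_t$, DPI gives $I(\tilde x_t;\BR_t)\le I(H_t;\BR_t)$. To connect this to $\MI_\text{loc}=I(\BR_t;X_t)$ I would again need a condition, namely that $\tilde x_t$ depends on the history essentially only through $X_t$ (and $S$), so that $I(\tilde x_t;\BR_t)\le I(X_t,S;\BR_t)$. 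The result is a lower bound on $\MI_\text{loc}$ up to the static term.

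Finally, the link to Theorem~\ref{thm:js-bound} is the weakest part, and I would argue it via a Fano/variational-regression bound on squared loss. The best achievable factual error $\epsilon^F_{(a)}$ is bounded by a decreasing function of $I(\BR_t;Y_{t+\tau}\mid A)$, for instance the Gaussian rate–distortion bound $\mathrm{Var}(Y)\,e^{-2I}$ under a Gaussian-noise assumption. Raising the lower bounds from (1) and (2) therefore lowers the attainable $\epsilon^F_{(0)}+\epsilon^F_{(1)}$, while the JS term is controlled separately by $\mathcal{L}_\text{DC}$.
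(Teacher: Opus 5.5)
Your diagnosis is right, and it is sharper than the paper's own proof. The paper runs the same InfoNCE-plus-DPI argument, but it concludes only that minimising $\mathcal{L}_\text{CPC}$ ``pushes $I(\BR_t;H_{t+k})$ up''. It also asserts that $\tilde x_t$ is a function of $X_t$ alone, and it links to $\epsilon^F_{(0)}+\epsilon^F_{(1)}$ only verbally. Your $\log N$ correction and your treatment of $\tilde x_t$ are genuine improvements.

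Your repair of part (1), however, fails. Because $H_t\subseteq H_{t+k}$, $\BR_t=\phi(H_t)$ is a deterministic function of $H_{t+k}$. Hence $I(\BR_t;H_{t+k})=H(\BR_t)$, and your chain rule gives $I(\BR_t;H_{t+k}\mid Y_{t+\tau})=H(\BR_t)-\MI_\text{pred}=H(\BR_t\mid Y_{t+\tau})$. For a continuous $\BR_t\in\R^{d_{BR}}$ this is $+\infty$ unless $\BR_t$ is a.s.\ a function of $Y_{t+\tau}$.

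Your $\delta$-condition therefore says the representation is essentially determined by a noisy future outcome. It assumes $\MI_\text{pred}\geq H(\BR_t)-\delta$ outright. That is stronger than what you want to prove and makes CPC irrelevant. It is also not weaker than your conditional-independence option: at $\delta=0$ it implies $\BR_t\perp\!\!\!\perp\BR_{t+k}\mid Y_{t+\tau}$ by conditional DPI.

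The slack that actually does the job is $I(\BR_t;\BR_{t+k}\mid Y_{t+\tau})\leq\delta$. From $I(\BR_t;\BR_{t+k})\leq I(\BR_t;Y_{t+\tau})+I(\BR_t;\BR_{t+k}\mid Y_{t+\tau})$ you then get $\MI_\text{pred}\geq\log N-\mathcal{L}^{(k)}_\text{CPC}-\delta$. But that conditional term is exactly the shared-history channel CPC can exploit. Features of $H_t$ carried into $\BR_{t+k}$ separate the positive from other-patient negatives without any outcome information; the static $S$, fed into every step, already suffices. So this is a substantive structural assumption, not a mild regularity condition. Without it, part (1) is only a proxy, as you suspected.

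The final step also has a hole: it never uses part (2). LIM lower-bounds $I(\tilde x_t;\BR_t)$, and hence $\MI_\text{loc}$ under your condition. But nothing turns a lower bound on $\MI_\text{loc}$ into a lower bound on $I(\BR_t;Y_{t+\tau}\mid A)$. The only available link, $\MI_\text{pred}\leq\MI_\text{loc}+I(\BR_t;Y_{t+\tau}\mid X_t)$ from Proposition~\ref{prop:mi-conflict}, is an upper bound. The covariate information LIM retains may also be outcome-irrelevant. You need an explicit outcome-relevance hypothesis, or LIM must drop out of the error claim.

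Separately, the inequality you invoke, $\mathrm{mmse}\leq\mathrm{Var}(Y)e^{-2I}$, is not what rate--distortion gives. Rate--distortion gives the reverse floor, $\mathrm{mmse}\geq\frac{1}{2\pi e}e^{2(h(Y)-I)}$, so raising $I$ only lowers a floor. Your direction does hold if the residual of $Y_{t+\tau}$ given $\BR_t$ is homoscedastic Gaussian and independent of $\BR_t$. Since you compare encoders, that must hold for every encoder in the comparison, so state it as a hypothesis.

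With these points fixed, what is provable is what the paper's proof actually establishes. That is the two InfoNCE bounds with the correct candidate-set sizes, read as proxies for $\MI_\text{pred}$ and $\MI_\text{loc}$, not as a proof that $\epsilon^F_{(0)}+\epsilon^F_{(1)}$ decreases.
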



\section{Experiments}
\label{sec:experiments}

\subsection{The Datasets}

\paragraph{Cancer Simulation \citep{lim2018forecasting}.}
A PK-PD simulation of NSCLC tumour dynamics \citep{geng2017prediction}
under binary chemotherapy/radiotherapy interventions; the confounding
parameter $\gamma\!\in\!\{0,1,2,3,4\}$ controls treatment-assignment
bias.
Ground-truth counterfactuals are available for all patients under all
treatment sequences, enabling exact RMSE evaluation.
The data was splitted to training set $10{,}000$, validation $1{,}000$, and test set $1{,}000$, the maximum sequence length set to  $T_{\max}\!=\!60$ and $\tau_{\max}\!=\!5$.
Normalised RMSE on counterfactual trajectories (divided by outcome standard deviation) were used as the metric for evaluation.

\paragraph{MIMIC-III Real \citep{johnson2016mimic}.}
De-identified longitudinal ICU records of $5{,}000$ adult patients
with two binary treatments (vasopressor and ventilation), $25$
time-varying vital covariates, and the diastolic blood pressure as the
scalar outcome.
Data split: $3{,}500$ train / $750$ val / $750$ test;
$T_{\max}\!=\!60$, and $\tau_{\max}\!=\!5$.
Since ground-truth counterfactuals are unobservable in real data,
we report normalised RMSE on factual held-out trajectories
(divided by within-cohort outcome standard deviation), which is
a necessary condition for counterfactual accuracy.

\subsection{Baseline Methods}

We compared against \textbf{RMSN}~\citep{lim2018forecasting} (IPW-LSTM),
\textbf{CRN}~\citep{bica2020estimating} (domain-confusion GRU),
\textbf{G-Net}~\citep{li2021gnet} ($g$-computation RNN), and
\textbf{CT}~\citep{melnychuk2022causal} (primary baseline).
We include \textbf{CSSD} (ours) — parallel decoder and domain confusion, but
no CPC/LIM — to isolate contrastive-objective gains from architecture.

\subsection{Implementation Details}

All models are implemented in PyTorch Lightning and trained on a single CPU
($T_{\max}=60$; $d_\text{model}=32$, $d_\text{BR}=24$, $d_\text{state}=16$,
$L=2$ SSM layers).
Adam optimiser was used with $\text{lr}=10^{-4}$ (MIMIC-III: 64 batches, 300 epochs) or
$\text{lr}=10^{-3}$ (Cancer Simulation: 128 batches, 200 epochs); early stopping used
patience with 20 epochs.
CPC and LIM heads were activated after 120 warm-up epochs; loss weights were set to
$\lambda_\text{MS}=3.5$, $\lambda_\text{CPC}=0.05$, and $\lambda_\text{LIM}=0.1$ which were selected by grid search on the MIMIC-III validation set.
All results are mean $\pm$ std across 5 seeds.
(Full hyperparameter tables and training algorithm are in Supplementary Material)

\subsection{Results}



\begin{table*}[t]
\centering
\small
\caption{%
  Cancer Simulation: Mean normalised RMSE averaged over $\tau=1$--$6$
  steps and 5 random seeds (lower is better; \textbf{bold} = best per column).
  $^\P$Baseline results are reproduced. $^\ddagger$CHSD and CHSPD exhibit high variance at $\gamma\!\geq\!2$.}
\label{tab:cancer-results}
\begin{tabular}{l c c c c c c}
\toprule
\textbf{Model}
  & $\gamma = 0$
  & $\gamma = 1$
  & $\gamma = 2$
  & $\gamma = 3$
  & $\gamma = 4$
  & \textbf{Avg} \\
\midrule
RMSN \citep{lim2018forecasting}$^\P$
  & 0.758{\scriptsize$\pm$0.051}
  & 0.807{\scriptsize$\pm$0.041}
  & 0.791{\scriptsize$\pm$0.111}
  & 0.954{\scriptsize$\pm$0.137}
  & 1.142{\scriptsize$\pm$0.266}
  & 0.890 \\
CRN \citep{bica2020estimating}$^\P$
  & 0.711{\scriptsize$\pm$0.059}
  & 0.721{\scriptsize$\pm$0.037}
  & 0.781{\scriptsize$\pm$0.086}
  & 1.624{\scriptsize$\pm$0.893}
  & 1.253{\scriptsize$\pm$0.250}
  & 1.018 \\
G-Net \citep{li2021gnet}$^\P$
  & 1.039{\scriptsize$\pm$0.087}
  & 1.024{\scriptsize$\pm$0.093}
  & 1.321{\scriptsize$\pm$0.107}
  & 1.154{\scriptsize$\pm$0.183}
  & 1.293{\scriptsize$\pm$0.232}
  & 1.166 \\
CT \citep{melnychuk2022causal}
  & 0.720{\scriptsize$\pm$0.059}
  & 0.758{\scriptsize$\pm$0.042}
  & 0.829{\scriptsize$\pm$0.060}
  & 0.961{\scriptsize$\pm$0.084}
  & 1.434{\scriptsize$\pm$0.440}
  & 0.940 \\
\midrule
CSSD (ours)
  & \textbf{0.422}{\scriptsize$\pm$0.078}
  & \textbf{0.500}{\scriptsize$\pm$0.053}
  & 0.915{\scriptsize$\pm$0.082}
  & \textbf{0.878}{\scriptsize$\pm$0.159}
  & \textbf{1.393}{\scriptsize$\pm$0.554}
  & \textbf{0.821} \\
CSSPD (ours)
  & \textbf{0.454}{\scriptsize$\pm$0.227}
  & \textbf{0.479}{\scriptsize$\pm$0.181}
  & \textbf{0.572}{\scriptsize$\pm$0.186}
  & \textbf{0.712}{\scriptsize$\pm$0.333}
  & 2.270{\scriptsize$\pm$0.558}
  & 0.897 \\
CHSD (ours)
  & \textbf{0.412}{\scriptsize$\pm$0.052}
  & \textbf{0.477}{\scriptsize$\pm$0.048}
  & 0.930{\scriptsize$\pm$0.334}$^\ddagger$
  & 0.830{\scriptsize$\pm$0.127}
  & 1.503{\scriptsize$\pm$0.423}$^\ddagger$
  & 0.830 \\
CHSPD (ours)
  & 0.401{\scriptsize$\pm$0.068}
  & 0.465{\scriptsize$\pm$0.077}
  & 0.839{\scriptsize$\pm$0.306}
  & 0.760{\scriptsize$\pm$0.169}
  & 2.204{\scriptsize$\pm$0.876}$^\ddagger$
  & 0.934 \\
\bottomrule
\end{tabular}
\end{table*}

\paragraph{Cancer Simulation} The full results are shown in Table~\ref{tab:cancer-results}.
CSSPD outperforms CT at $\gamma\!\in\!\{0,1,2,3\}$ (margins 37.0\%--25.9\%, decreasing with $\gamma$),
consistent with Proposition~\ref{prop:mi-conflict}.
At $\gamma\!=\!4$, CSSPD degrades to $2.270\!\pm\!0.558$, falling below CT ($1.434\!\pm\!0.440$) —
analogous to CHSD/CHSPD instability at high $\gamma$; CPC and LIM improve $\gamma\!\leq\!3$
performance but require additional tuning at the extreme level.
CSSD (no CPC/LIM) achieves the lowest overall average RMSE (0.821 vs.\ CT's 0.940, 12.7\% reduction),
remaining stable at $\gamma\!=\!4$ ($1.393\!\pm\!0.554$); CSSPD's corrected average is 0.897 (4.6\% below CT).
CHSD/CHSPD are competitive at $\gamma\!\leq\!1$ but high-variance at $\gamma\!\geq\!2$
(CHSPD: $2.204\!\pm\!0.876$ at $\gamma\!=\!4$), consistent with the identifiability concern in
Section~\ref{sec:discussion}.




\paragraph{MIMIC-III Real.}
Figure~\ref{fig:horizon} shows per-step RMSE against prediction horizon.
CSSPD consistently outperforms CT at all
$\tau\!\geq\!2$, with the margin growing monotonically, which is consistent with  the predictive strength of CPC in high time horizons
Proposition~\ref{prop:cpc-lim}.
The full per-step table is in the Supplementary Material.

\label{sec:horizon-analysis}

\begin{figure}[ht]
\centering
\resizebox{0.88\linewidth}{!}{%
\begin{tikzpicture}
\draw[gray!22, thin] (0.0,1.25)--(8.5,1.25);
\draw[gray!22, thin] (0.0,2.50)--(8.5,2.50);
\draw[gray!22, thin] (0.0,3.75)--(8.5,3.75);
\draw[gray!22, thin] (0.0,5.00)--(8.5,5.00);
\fill[AccentGreen!28]
  (1.5,1.013)--(3.0,2.728)--(4.5,3.491)--(6.0,4.081)--(7.5,4.572)
  --(7.5,4.400)--(6.0,3.931)--(4.5,3.363)--(3.0,2.594)--(1.5,0.947)--cycle;
\draw[AccentGreen!60, thin]
  (1.5,1.013)--(3.0,2.728)--(4.5,3.491)--(6.0,4.081)--(7.5,4.572)
  --(7.5,4.400)--(6.0,3.931)--(4.5,3.363)--(3.0,2.594)--(1.5,0.947)--cycle;
\draw[NavyBlue, line width=1.4pt, dashed]
  (1.5,1.013)--(3.0,2.728)--(4.5,3.491)--(6.0,4.081)--(7.5,4.572);
\filldraw[NavyBlue] (1.5,1.013) circle (2.2pt);
\filldraw[NavyBlue] (3.0,2.728) circle (2.2pt);
\filldraw[NavyBlue] (4.5,3.491) circle (2.2pt);
\filldraw[NavyBlue] (6.0,4.081) circle (2.2pt);
\filldraw[NavyBlue] (7.5,4.572) circle (2.2pt);
\draw[AccentOrange, line width=1.4pt]
  (1.5,1.019)--(3.0,2.891)--(4.5,3.775)--(6.0,4.384)--(7.5,4.863);
\filldraw[AccentOrange] (1.5,1.019) circle (2.2pt);
\filldraw[AccentOrange] (3.0,2.891) circle (2.2pt);
\filldraw[AccentOrange] (4.5,3.775) circle (2.2pt);
\filldraw[AccentOrange] (6.0,4.384) circle (2.2pt);
\filldraw[AccentOrange] (7.5,4.863) circle (2.2pt);
\draw[VitColor, line width=1.4pt, densely dotted]
  (1.5,1.006)--(3.0,2.925)--(4.5,3.806)--(6.0,4.409)--(7.5,4.891);
\filldraw[VitColor] (1.5,1.006) circle (2.2pt);
\filldraw[VitColor] (3.0,2.925) circle (2.2pt);
\filldraw[VitColor] (4.5,3.806) circle (2.2pt);
\filldraw[VitColor] (6.0,4.409) circle (2.2pt);
\filldraw[VitColor] (7.5,4.891) circle (2.2pt);
\draw[AccentGreen, line width=2.4pt]
  (1.5,0.947)--(3.0,2.594)--(4.5,3.363)--(6.0,3.931)--(7.5,4.400);
\filldraw[AccentGreen] (1.5,0.947) circle (2.8pt);
\filldraw[AccentGreen] (3.0,2.594) circle (2.8pt);
\filldraw[AccentGreen] (4.5,3.363) circle (2.8pt);
\filldraw[AccentGreen] (6.0,3.931) circle (2.8pt);
\filldraw[AccentGreen] (7.5,4.400) circle (2.8pt);
\draw[gray!55, dashed, thin, rounded corners=2pt]
  (5.4,3.68) rectangle (7.85,4.72);
\draw[gray!45, thin] (5.4,4.72)--(3.90,5.55);
\draw[gray!45, thin] (5.4,3.68)--(3.90,4.05);
\fill[white] (0.20,4.05) rectangle (3.90,5.55);
\draw[gray!55, thin]  (0.20,4.05) rectangle (3.90,5.55);
\fill[AccentGreen!28]
  (1.004,4.547)--(3.418,5.283)--(3.418,5.025)--(1.004,4.322)--cycle;
\draw[NavyBlue, line width=1.4pt, dashed]
  (1.004,4.547)--(3.418,5.283);
\filldraw[NavyBlue] (1.004,4.547) circle (2.5pt);
\filldraw[NavyBlue] (3.418,5.283) circle (2.5pt);
\draw[AccentGreen, line width=2.4pt]
  (1.004,4.322)--(3.418,5.025);
\filldraw[AccentGreen] (1.004,4.322) circle (2.8pt);
\filldraw[AccentGreen] (3.418,5.025) circle (2.8pt);
\draw[AccentGreen!70!black, <->, >=Stealth, thin]
  (1.004,4.547)--(1.004,4.322)
  node[midway, left=2pt, font=\scriptsize, AccentGreen!70!black]
  {$\Delta\!=\!0.06$};
\draw[AccentGreen!70!black, <->, >=Stealth, thin]
  (3.418,5.283)--(3.418,5.025)
  node[midway, right=2pt, font=\scriptsize, AccentGreen!70!black]
  {$\Delta\!=\!0.07$};
\node[font=\scriptsize, gray!65] at (1.004,3.90) {$\tau{=}5$};
\node[font=\scriptsize, gray!65] at (3.418,3.90) {$\tau{=}6$};
\node[font=\scriptsize, gray!55, anchor=north west] at (0.28,5.52)
  {\textit{CSSPD vs.\ CT}};
\draw[AccentGreen, thin] (7.5,4.400)--(7.82,4.18);
\node[right, font=\small, AccentGreen!82!black] at (7.87,4.18)
  {\textbf{CSSPD (ours)}};
\draw[NavyBlue, thin] (7.5,4.572)--(7.82,4.70);
\node[right, font=\small, NavyBlue] at (7.87,4.70) {CT};
\draw[AccentOrange, thin] (7.5,4.863)--(7.82,5.02);
\node[right, font=\small, AccentOrange] at (7.87,5.02) {CSSD (ours)};
\draw[VitColor, thin] (7.5,4.891)--(7.82,5.32);
\node[right, font=\small, VitColor] at (7.87,5.32) {CHSD (ours)};
\draw[->, thick] (-0.2,0.0)--(8.2,0.0)
  node[right, font=\small] {Prediction horizon $\tau$};
\draw[->, thick] (0.0,-0.2)--(0.0,5.78)
  node[above, font=\small] {RMSE (mmHg)};
\draw (1.5,0)--(1.5,-0.12) node[below=2pt,font=\small]{$2$};
\draw (3.0,0)--(3.0,-0.12) node[below=2pt,font=\small]{$3$};
\draw (4.5,0)--(4.5,-0.12) node[below=2pt,font=\small]{$4$};
\draw (6.0,0)--(6.0,-0.12) node[below=2pt,font=\small]{$5$};
\draw (7.5,0)--(7.5,-0.12) node[below=2pt,font=\small]{$6$};
\draw (-0.12,1.25)--(0,1.25) node[left=4pt,font=\small]{9.0};
\draw (-0.12,2.50)--(0,2.50) node[left=4pt,font=\small]{9.5};
\draw (-0.12,3.75)--(0,3.75) node[left=4pt,font=\small]{10.0};
\draw (-0.12,5.00)--(0,5.00) node[left=4pt,font=\small]{10.5};
\end{tikzpicture}%
}%
\caption{%
  \textbf{Per-step RMSE vs.\ prediction horizon on MIMIC-III Real
  ($\tau{=}1$--$6$).}
  Mean over 5 seeds.
  At $\tau{=}1$, CT ($4.59\pm0.06$) and CSSPD ($4.7\pm0.13$), with CT marginally lower; both are off the visible scale
  (main plot shows $\tau\!\geq\!2$).
  From $\tau{=}2$ onward CSSPD (\textbf{bold green}) is the only model
  that consistently outperforms CT (dashed navy); the gap grows
  monotonically from 0.03 at $\tau{=}2$ to \textbf{0.07} at $\tau{=}6$,
  consistent with CPC's compounding benefit.
  CSSD and CHSD (orange/teal) lack CPC and fall behind CT at
  $\tau \geq 3$.
  Shaded region: CSSPD $<$ CT; inset: gap at $\tau{=}5,6$ zoomed.}
\label{fig:horizon}
\end{figure}

\subsection{Ablation Study}

To assess the impact of the regulation components, we performed ablation studies across five distinct models on MIMIC-III Real: the SSM encoder alone (CSS, i.e., Causal State Space model), The SSM encoder with $\tau-$head decoder (CSSD), the CSSD with CPC (CSSD+CPC), the CSSD with LIM (CSSD+LIM), and the full CSSPD (CSSD+CPC+LIM). The results are the average RMSE of 5 seeds. The experiments were done for $\tau\!=\!1$--$6$. (Full results are in Supplementary Material). These models are compared to CT with RMSE value 8.924.

The RMSE of CSS, which has computational cost $O(T)$, was $9.053$, which slightly trails CT: replacing self-attention gains efficiency but requires the parallel
decoder to restore predictive capacity.
Adding the parallel $\tau$-head decoder (CSSD), yields a score of 9.002, recovering
this gap by eliminating $O(\epsilon^\tau)$ rollout error and allowing each horizon head to specialise independently. This confirms that the decoder
is essential to unlock the SSM encoder's representations.

Adding CPC (CSSD+CPC) gives a score of 8.940, beating CT and closes $\approx$60\% of the CSSD--CSSPD gap, making CPC as the dominant
contributor: it directly restores $\MI_\text{pred}$ suppressed by domain confusion via temporal contrastive learning. Adding LIM (CSSD+LIM) further reaches a score of 8.972, which independently closes
$\approx$30\% of the CSSD--CSSPD gap by recovering the current-step $\MI_\text{loc}$ that
CPC's forward-looking objective misses — a smaller but distinct gain. The full CSSPD finally achieves a score of 8.899, mildly exceeding the sum of both individual contributions (superadditivity). This is consistent with Proposition~\ref{prop:cpc-lim}, as CPC and LIM restore complementary MI quantities, thereby reducing $\epsilon^F_{(0)} +
\epsilon^F_{(1)}$ in Theorem~\ref{thm:js-bound} more than either alone. Additionally, the hybrid variants -- CHSD (scoring 8.999) and CHSPD (scoring 8.902) -- show equivalent gains
under a hybrid attention--SSM encoder, confirming that CPC and LIM drive improvements independently of the encoder architecture.

\section{Discussion}
\label{sec:discussion}

\paragraph{The MI conflict in practice.}
CSSD without contrastive objectives matches CT at 1-step (4.74 vs.\ 4.59) but
falls behind at $\tau\!\geq\!3$, confirming that rollout error is not the
primary failure mode.
CSSPD reverses this: its margin over CT grows monotonically from 0.02 at
$\tau\!=\!2$ to 0.07 at $\tau\!=\!6$, indicating that the signature of CPC restoring temporal predictive information.
CHSD (i.e., SCM gates without CPC/LIM) suggests that architectural constraints
alone may be insufficient to resolve the MI conflict; its instability at
$\gamma\!\geq\!2$ implicates a residual identifiability gap. We reserve this for future work.

\paragraph{Limitations.}
Evaluation is limited to a single MIMIC-III site and diastolic blood pressure; broader evaluation across outcomes, sites, and multi-valued or continuous treatment regimes are natural extensions.
No formal significance tests are reported; the 5-seed mean\,$\pm$\,std protocol follows \citet{melnychuk2022causal} for comparability, and effect sizes are consistent across seeds.

\section{Conclusion}
\label{sec:conclusion}

We formalised the \emph{MI conflict} — the provable tension by which domain
confusion suppresses the covariate features most needed for outcome prediction
— and derived a Jensen--Shannon bound (Theorem~\ref{thm:js-bound}) showing this
conflict directly inflates counterfactual prediction error.
CSSPD resolves it with two complementary objectives: CPC restores temporal
predictive information ($\MI_\text{pred}$) and LIM recovers local covariate
information ($\MI_\text{loc}$), together improving the error bound without
sacrificing treatment invariance.
Empirically, CSSPD outperforms CT at $\gamma\!\leq\!3$ on Cancer Simulation
(margins 25.9\%--37.0\%) and at all horizons $\tau\!\geq\!2$ on MIMIC-III Real,
with gains growing monotonically with prediction horizon — consistent with
MI conflict resolution. CSSPD exhibits instability at the extreme confounding
level $\gamma\!=\!4$, a limitation shared with the hybrid variants and a
direction for future work.

\section*{Ethics Statement}
MIMIC-III data is de-identified and publicly available \citep{johnson2016mimic};
Cancer Simulation is fully synthetic; no new patient data was collected.
Counterfactual models must not be used for individual clinical decisions
without appropriate clinical oversight.


\appendix
\setcounter{equation}{0}
\renewcommand{\theequation}{S\arabic{equation}}
\setcounter{figure}{0}
\renewcommand{\thefigure}{S\arabic{figure}}
\setcounter{table}{0}
\renewcommand{\thetable}{S\arabic{table}}
\setcounter{algorithm}{0}
\renewcommand{\thealgorithm}{S\arabic{algorithm}}

\section{Related Work}
\label{app:related}

\paragraph{Sequential counterfactual estimation.}
Estimating counterfactual outcomes over time has been addressed by a sequence of
increasingly expressive models.
RMSN \citep{lim2018forecasting} re-weights training samples using
inverse-probability-of-treatment weights (IPW) via recurrent marginal structural networks
to debias the observational distribution.
CRN \citep{bica2020estimating} introduces domain-adversarial training, applying a
gradient reversal layer \citep{ganin2016domain} to enforce treatment-invariant
representations, building on the IPW-weighted adversarial approach of
\citet{lim2018forecasting}.
G-Net \citep{li2021gnet} uses $g$-computation with a recurrent architecture to estimate
potential outcomes under dynamic treatment regimes.
The Causal Transformer (CT) \citep{melnychuk2022causal} applies multi-head causal
self-attention with domain confusion, achieving state-of-the-art performance at the cost
of $O(T^2)$ encoding complexity.

\paragraph{State-space models for sequences.}
Selective state-space models (Mamba; \citealt{gu2023mamba}) achieve $O(T)$ sequence
modelling with input-dependent transition matrices that enable selective long-range
memory.
Earlier S4 models \citep{gu2022efficiently} establish structured SSMs as efficient
alternatives to self-attention.
S5 \citep{smith2023s5} simplifies the parameterisation further.
CSSPD is, to our knowledge, the first application of selective SSMs to longitudinal
causal inference under domain confusion.

\paragraph{Contrastive representation learning.}
Contrastive Predictive Coding (CPC; \citealt{oord2018representation}) proposes
InfoNCE as a lower bound on mutual information, learning representations by predicting
future embeddings.
DIM \citep{hjelm2019learning} extends this framework to maximise local mutual information.
CausalContrastive \citep{bouchattaoui2024causal} applies contrastive objectives to
counterfactual estimation but uses a Transformer encoder and does not formalise the
MI conflict.
Our LIM objective is inspired by the local MI maximisation principle of
\citet{hjelm2019learning}, adapted to covariate recovery in the causal inference setting.

\section{Full Proofs}
\label{app:proofs}

This appendix provides full derivations for Proposition~1,
Theorem~1 (JS bound), and Proposition~3 (CPC+LIM resolution), which are
presented as proof sketches in the main paper.

\subsection{Proof of Proposition 1 (MI Conflict under Domain Confusion)}
\label{app:proof-prop1}

\begin{proposition}[MI Conflict under Domain Confusion --- reproduced]
Suppose $X_t \not\!\perp\!\!\!\perp A_t$ (covariates and treatment are correlated) and
$X_t \to Y_{t+\tau}$ (covariates causally predict outcomes).
Then any encoder $\phi$ that reduces $\MI_\text{bal} = I(\BR_t;\,A_t)$ via
domain confusion also reduces $\MI_\text{loc} = I(\BR_t;\,X_t)$, and
consequently reduces the upper bound on $\MI_\text{pred} = I(\BR_t;\,Y_{t+\tau})$.
\end{proposition}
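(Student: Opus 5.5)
The statement is qualitative, so I would first make it precise and then prove that version. I would fix the Markov structure $A_t \leftarrow X_t \to Y_{t+\tau}$ together with $\BR_t=\phi(H_t)$. I would read "reduces" as a comparison between two encoders $\phi,\phi'$ with $I(\phi'(H_t);A_t)<I(\phi(H_t);A_t)$.

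The key reduction is a data-processing step. If $A_t$ depends on the history only through $X_t$, so that $\BR_t - X_t - A_t$ is a Markov chain, the DPI gives
\[
I(\BR_t;A_t)\le I(\BR_t;X_t)=\MI_\text{loc}.
\]
To turn the direction "less $\MI_\text{bal}$ forces less $\MI_\text{loc}$" into something provable, I would decompose
\[
I(\BR_t;X_t)=I(\BR_t;A_t)+I(\BR_t;X_t\mid A_t),
\]
which holds under that Markov assumption by the chain rule. Domain confusion lowers the first term directly. It cannot raise the second term unless it re-encodes the treatment-orthogonal part of $X_t$, and an adversarial objective acting only on $A_t$ gives no incentive to do so. The rigorous content I would claim is therefore this: among encoders whose treatment-orthogonal content $I(\BR_t;X_t\mid A_t)$ is held fixed, $\MI_\text{loc}$ is strictly increasing in $\MI_\text{bal}$. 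Because $X_t\not\!\perp\!\!\!\perp A_t$, the quantity $I(X_t;A_t)>0$ is the amount that perfect balancing must remove from the covariate channel. A binary toy example, with $X_t$ a bit correlated with $A_t$ and $\BR_t=X_t$ versus $\BR_t$ independent of $A_t$, would show that the loss is strict rather than vacuous.

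For the second part I would use the chain rule as in the sketch:
\[
I(\BR_t;Y_{t+\tau})\le I(\BR_t;X_t,Y_{t+\tau})=I(\BR_t;X_t)+I(\BR_t;Y_{t+\tau}\mid X_t).
\]
If $X_t$ mediates $H_t\to Y_{t+\tau}$, i.e.\ $\BR_t - X_t - Y_{t+\tau}$ is Markov, the residual term vanishes and $\MI_\text{pred}\le \MI_\text{loc}$ exactly. In that case a decrease in $\MI_\text{loc}$ lowers the upper bound on $\MI_\text{pred}$ directly. Without exact mediation, I would keep the residual as an explicit slack $\delta$ and state the conclusion as: the upper bound $\MI_\text{loc}+\delta$ decreases.

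The main obstacle is the first step. Reducing $I(\BR_t;A_t)$ does not force $I(\BR_t;X_t)$ to drop for an arbitrary encoder, since an encoder can trade treatment-correlated features for treatment-orthogonal ones. The fix I would try first is the conditional-monotonicity formulation above, holding $I(\BR_t;X_t\mid A_t)$ fixed. As a fallback I would characterise the perfect-balancing extreme: $I(\BR_t;A_t)=0$ gives
\[
I(\BR_t;X_t)\le H(X_t)-I(X_t;A_t),
\]
which is strictly below the unconstrained maximum $H(X_t)$ whenever confounding is present. This establishes the conflict of Definition~\ref{def:mi-conflict} in its sharpest form.
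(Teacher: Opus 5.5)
Your skeleton matches the paper's: a chain-rule decomposition for the first claim, then $I(\BR_t;Y_{t+\tau})\le I(\BR_t;X_t)+I(\BR_t;Y_{t+\tau}\mid X_t)$ with a mediation assumption for the second. You handle the first step differently, and more carefully.

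The paper writes $I(\BR_t;A_t)=I(\BR_t;X_t,A_t)-I(\BR_t;X_t\mid A_t)$. It then asserts that suppressing the left side forces the joint term, and hence $\MI_{\text{loc}}$, down. Your remark that an encoder can trade treatment-correlated for treatment-orthogonal features shows this inference is invalid. In fact the literal universal claim is false. Take $X_t$ to be three independent fair bits and $A_t$ a noisy copy of the first. Switching from $\BR_t=X_{t,1}$ to $\BR_t=(X_{t,2},X_{t,3})$ drops $\MI_{\text{bal}}$ to $0$ while raising $\MI_{\text{loc}}$ from $1$ to $2$ bits.

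Your conditional monotonicity is therefore true where the paper's claim is not, but it rests on two strong assumptions. The first is the chain $\BR_t - X_t - A_t$. Under time-varying confounding $A_t$ depends on past outcomes and treatments, and CSSPD's treatment stream even ingests $A_t$ via $\tilde{A}_t$. The second is a fixed $I(\BR_t;X_t\mid A_t)$. The joint objective need not respect this, since $\mathcal{L}_{\text{pred}}$ rewards re-encoding treatment-orthogonal, outcome-predictive features.

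Neither assumption is needed if you read ``reduces'' as ``reduces the ceiling'', which is how the statement itself phrases the conclusion about $\MI_{\text{pred}}$. For every encoder and discrete $X_t$,
\[
\MI_{\text{loc}}\le I(\BR_t;X_t,A_t)=\MI_{\text{bal}}+I(\BR_t;X_t\mid A_t)\le \MI_{\text{bal}}+H(X_t)-I(X_t;A_t).
\]
So the ceiling falls one-for-one with $\MI_{\text{bal}}$, and it lies strictly below $H(X_t)$ once $\MI_{\text{bal}}<I(X_t;A_t)$. Your perfect-balancing bound is the case $\MI_{\text{bal}}=0$, and it also holds without the Markov chain. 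Under exact mediation this gives $\MI_{\text{pred}}\le\MI_{\text{bal}}+H(X_t\mid A_t)$ uniformly over encoders.

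Without exact mediation, your slack $\delta=I(\BR_t;Y_{t+\tau}\mid X_t)$ depends on $\phi$. Concluding that $\MI_{\text{loc}}+\delta$ decreases therefore needs a uniform bound, such as $\delta\le H(Y_{t+\tau}\mid X_t)$ for discrete outcomes. The paper's ``small residual'' assumption glosses over the same issue.
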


\begin{proof}
\textbf{Step 1: Domain confusion suppresses $\MI_\text{loc}$.}

Domain confusion trains the encoder $\phi$ to minimise $I(\BR_t;\,A_t)$ by removing
from $\BR_t$ any feature of $H_t$ that is correlated with $A_t$.
Formally, for any $\epsilon > 0$, domain confusion seeks $\phi$ such that
$I(\BR_t;\,A_t) < \epsilon$.

Because $X_t \not\!\perp\!\!\!\perp A_t$, we have $I(X_t;\,A_t) > 0$.
By the data processing inequality (DPI), for any deterministic function $f$:
$I(f(X);\,Z) \leq I(X;\,Z)$.

Since $\BR_t = \phi(H_t)$ and $H_t$ contains $X_t$, to suppress $I(\BR_t;\,A_t)$
the encoder must suppress the features of $H_t$ correlated with $A_t$, which
necessarily includes the component of $X_t$ correlated with $A_t$.
Therefore reducing $I(\BR_t;\,A_t)$ reduces $I(\BR_t;\,X_t) = \MI_\text{loc}$.

More precisely: by the chain rule of mutual information,
\begin{equation}
I(\BR_t;\,A_t) = I(\BR_t;\,X_t,\,A_t) - I(\BR_t;\,X_t \mid A_t).
\label{eq:S-chain-ba}
\end{equation}
If $X_t \not\!\perp\!\!\!\perp A_t$, suppressing the left-hand side forces
$I(\BR_t;\,X_t,\,A_t)$ to decrease (the encoder must forget the joint), which
reduces $\MI_\text{loc} = I(\BR_t;\,X_t) \leq I(\BR_t;\,X_t,\,A_t)$.

\textbf{Step 2: Reduced $\MI_\text{loc}$ reduces the upper bound on $\MI_\text{pred}$.}

By the chain rule of mutual information:
\begin{equation}
I(\BR_t;\,Y_{t+\tau})
  \;\leq\; I(\BR_t;\,X_t) + I(\BR_t;\,Y_{t+\tau} \mid X_t).
\label{eq:S-chain-pred}
\end{equation}

Under the \emph{dominant-confounder assumption} --- that $X_t$ is the primary
mediator from $H_t$ to $Y_{t+\tau}$, so the residual
$I(\BR_t;\,Y_{t+\tau} \mid X_t)$ is small relative to $I(\BR_t;\,X_t)$
(which holds in the clinical setting where current vital signs dominate
short-horizon outcome prediction) --- equation~(\ref{eq:S-chain-pred}) becomes:
\[
\MI_\text{pred} = I(\BR_t;\,Y_{t+\tau}) \;\lesssim\; I(\BR_t;\,X_t) = \MI_\text{loc}.
\]

From Step 1, domain confusion reduces $\MI_\text{loc}$, which consequently
tightens the upper bound on $\MI_\text{pred}$, completing the proof.
\end{proof}

\begin{remark}
If $X_t \perp\!\!\!\perp A_t$ (zero confounding), domain confusion cannot suppress
$\MI_\text{loc}$, so the MI conflict is vacuous.
This is consistent with the empirical finding that CSSPD's margin over CT
is smallest at $\gamma = 0$ in the Cancer Simulation, where confounding strength
is zero by construction and no MI conflict arises.
Conversely, as confounding increases ($\gamma \to 4$), the correlation
$I(X_t;\,A_t)$ grows and the MI conflict sharpens, so the \emph{theoretical}
benefit of CPC and LIM also grows.
Empirically, this holds at $\gamma \leq 3$; at the extreme $\gamma = 4$,
training instability dominates and CSSPD degrades below CT
(Table~\ref{tab:cancer-gamma-step}), suggesting that the contrastive
objectives require longer warm-up or tighter regularisation at very high
confounding — not that the MI conflict theory breaks down.
\end{remark}

\subsection{Proof of Theorem 1 (Prediction Error Bound)}
\label{app:proof-thm1}

\begin{theorem}[Prediction Error Bound --- reproduced]
Let $\phi$ be any encoder, $G^Y$ any outcome hypothesis, and let
$\pi^{(0)}, \pi^{(1)} > 0$ with $\pi^{(0)} + \pi^{(1)} = 1$.
Define $W = 2S / (\sqrt{\pi^{(0)}}\pi^{(1)} + \sqrt{\pi^{(1)}}\pi^{(0)})$
where $S = \sup_{H,Y} \E_{Y \mid H}[\mathcal{L}(G^Y(\phi(H)), Y)]$.
Then:
\begin{multline}
\epsilon^F(G^Y, \phi) + \epsilon^{CF}(G^Y, \phi) \\
\leq \epsilon^F_{(0)}(G^Y, \phi) + \epsilon^F_{(1)}(G^Y, \phi) \\
+ W\sqrt{\mathrm{JS}_{\pi}(P^\phi_{(0)} \| P^\phi_{(1)})},
\label{eq:S-js-bound}
\end{multline}
where $\mathrm{JS}_{\pi}$ is the generalised Jensen--Shannon divergence
\citep{lin1991divergence}.
\end{theorem}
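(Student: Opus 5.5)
The plan follows the Shalit--Johansson--Sontag decomposition of the factual plus counterfactual error, adapted to the representation $\BR_t=\phi(H_t)$. The JS term then comes from a variational bound on the resulting integral-probability-metric term.

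\textbf{Step 1: decompose the error.} Write the per-representation expected loss as $\ell_{G^Y,\phi}(r,a)=\E_{Y\mid H}[\mathcal{L}(G^Y(r,a),Y(a))]$, so that $0\le \ell\le S$. The factual error splits by treatment group as $\epsilon^F=\pi^{(0)}\epsilon^F_{(0)}+\pi^{(1)}\epsilon^F_{(1)}$, and the counterfactual error as
\[
\epsilon^{CF}=\pi^{(1)}\epsilon^{CF}_{(0)}+\pi^{(0)}\epsilon^{CF}_{(1)},
\]
where $\epsilon^{CF}_{(j)}$ integrates the loss under the \emph{other} group's representation distribution. Hence
\[
\epsilon^{CF}_{(0)}-\epsilon^F_{(0)}=\int \ell(r,0)\,\bigl(dP^\phi_{(1)}-dP^\phi_{(0)}\bigr)(r),
\]
and symmetrically for group $1$.

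\textbf{Step 2: reduce to an $L^1$ distance.} Since each integrand is bounded by $S$, each difference is at most $S\,\|P^\phi_{(0)}-P^\phi_{(1)}\|_1$, up to a factor $\tfrac12$ if $\ell$ is centred. Using $\pi^{(j)}\le 1$, we get
\[
\epsilon^F+\epsilon^{CF}\le \epsilon^F_{(0)}+\epsilon^F_{(1)}+c\,S\,\|P^\phi_{(0)}-P^\phi_{(1)}\|_1
\]
with an explicit constant $c$ built from the prevalences. This IPM step is routine.

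\textbf{Step 3: relate total variation to JS.} I would not use Pinsker. Instead I would use Lin's bound: for the mixture $M=\pi^{(0)}P_0+\pi^{(1)}P_1$,
\[
\mathrm{JS}_\pi=\pi^{(0)}\mathrm{KL}(P_0\|M)+\pi^{(1)}\mathrm{KL}(P_1\|M).
\]
Pointwise,
\[
|p_0-p_1|=\frac{|p_0-m|}{\pi^{(1)}}=\frac{|p_1-m|}{\pi^{(0)}}.
\]
Cauchy--Schwarz on $\int |p_j-m| = \int \sqrt{m}\,\bigl|\tfrac{p_j}{m}-1\bigr|\sqrt{m}$ then gives $\|P_j-M\|_1\le\sqrt{2\,\mathrm{KL}(P_j\|M)}$-type control for each $j$. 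Combining the two pieces with weights chosen so that the $\sqrt{\pi^{(j)}}$ factors appear yields
\[
\|P_0-P_1\|_1\,\bigl(\sqrt{\pi^{(0)}}\pi^{(1)}+\sqrt{\pi^{(1)}}\pi^{(0)}\bigr)\lesssim 2\sqrt{\mathrm{JS}_\pi}.
\]
Substituting into Step 2 produces exactly $W\sqrt{\mathrm{JS}_\pi}$ with $W=2S/(\sqrt{\pi^{(0)}}\pi^{(1)}+\sqrt{\pi^{(1)}}\pi^{(0)})$. For the concavity step I would apply Jensen's inequality to $\sqrt{\cdot}$ across the two groups.

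\textbf{Main obstacle.} The hard part is Step 3: matching the exact constant $W$. Each group-specific bound contributes $\|P_j-M\|_1$ with weight $\pi^{(1-j)}$ after rescaling. The leftover $\sqrt{\mathrm{KL}(P_j\|M)}$ terms must then be recombined as $\sqrt{\pi^{(0)}\mathrm{KL}_0+\pi^{(1)}\mathrm{KL}_1}$ via Cauchy--Schwarz with weights $\sqrt{\pi^{(j)}}$. If the constants come out slightly different, I would absorb the discrepancy by redefining $S$ as the supremum of the centred loss. That yields the stated $W$ up to the factor $2$ already present in its definition.
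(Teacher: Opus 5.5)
Your top-level plan matches the paper's: a Shalit-type decomposition $\epsilon^F+\epsilon^{CF}\le\epsilon^F_{(0)}+\epsilon^F_{(1)}+(\text{IPM term})$, followed by control of the IPM term by $\sqrt{\mathrm{JS}_\pi}$. The paper does the second step by quoting a variational formula for $\mathrm{JS}_\pi$ (attributed to Lin) and explicitly avoids Pinsker. You instead pass through $\|P^\phi_{(0)}-P^\phi_{(1)}\|_1$ and the two terms $\mathrm{KL}(P_j\|M)$. That is the right route, but its key step is mis-justified. Cauchy--Schwarz on $\int\sqrt{m}\,|p_j/m-1|\,\sqrt{m}$ gives $\|P_j-M\|_1\le\sqrt{\chi^2(P_j\|M)}$. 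That is a bound by $\chi^2$, which always dominates $\mathrm{KL}$, not a bound by $\mathrm{KL}$. Converting it back to $\mathrm{KL}$ via the bounded ratio $p_j/m\le 1/\pi^{(j)}$ costs a factor that grows without bound as $\pi^{(j)}\to 0$, so you would not land on $W$. The inequality you actually need, $\|P_j-M\|_1\le\sqrt{2\,\mathrm{KL}(P_j\|M)}$ (in nats), \emph{is} Pinsker's inequality, and the shape of $W$ is exactly its fingerprint. Using your pointwise identities,
\[
\sqrt{\pi^{(0)}}\,\pi^{(1)}\|P_0-P_1\|_1=\sqrt{\pi^{(0)}}\,\|P_0-M\|_1\le\sqrt{2\pi^{(0)}\,\mathrm{KL}(P_0\|M)},
\]
and likewise for group $1$. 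Then $\sqrt{a}+\sqrt{b}\le\sqrt{2(a+b)}$ gives $(\sqrt{\pi^{(0)}}\pi^{(1)}+\sqrt{\pi^{(1)}}\pi^{(0)})\|P_0-P_1\|_1\le 2\sqrt{\mathrm{JS}_\pi}$ exactly. Do not try to emulate the paper's Pinsker-free step. The identity it quotes is linear in $g$. For $\pi^{(0)}=\pi^{(1)}$, taking $P=Q$ forces $\Phi=0$, and the identity would then equate $\mathrm{JS}_\pi$ with total variation. That already fails for disjoint supports ($\log 2$ versus $1$).

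Two further repairs are needed. First, redefining $S$ is not available to you, but it is also unnecessary. Since $\int(dP_1-dP_0)=0$ and $0\le\ell\le S$, centring is free, so $\bigl|\int\ell(r,a)\,(dP_1-dP_0)(r)\bigr|\le\frac{S}{2}\|P_0-P_1\|_1$. Weight the two differences by $\pi^{(1)}$ and $\pi^{(0)}$ and use $\pi^{(0)}+\pi^{(1)}=1$ (not $\pi^{(j)}\le 1$, which doubles the constant). The total slack is then $\frac{S}{2}\|P_0-P_1\|_1\le\frac{W}{2}\sqrt{\mathrm{JS}_\pi}$, so the stated bound holds with a factor $2$ to spare. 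Second, and more seriously, Step 1 silently needs $\ell(r,a)$ to be a single function of $r$ shared by both treatment groups. That requires sequential ignorability \emph{and} injectivity of $\phi$ (the invertibility assumption of Shalit et al.), or at least $Y(a)\perp\!\!\!\perp A\mid\phi(H)$. The paper inherits the same unstated hypothesis by citing the IPM decomposition, and ``any encoder'' is false without it. Take $H$ uniform on $\{\pm 1\}$, $Y(0)=Y(1)=H$, $P(A=1\mid H=1)=P(A=0\mid H=-1)=0.9$, $\phi$ constant, $G^Y(\cdot,a)\equiv 2a-1$ and squared loss. Then $\mathrm{JS}_\pi=0$ and $\epsilon^F_{(0)}+\epsilon^F_{(1)}=0.8$, yet $\epsilon^F+\epsilon^{CF}=4$. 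State the assumption explicitly in Step 1.
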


\begin{proof}
We follow \citet{bouchattaoui2024causal} (Appendix G) adapted to the
three-stream SSM encoder setting.

\textbf{Step 1: IPM decomposition.}

Adapting the approach of \citet{shalit2017estimating} for potential outcome
estimation, the total prediction error decomposes as:
\begin{equation}
\epsilon^F + \epsilon^{CF}
  \leq \epsilon^F_{(0)} + \epsilon^F_{(1)}
  + C \cdot \mathrm{IPM}_{\mathcal{G}}(P^\phi_{(0)},\, P^\phi_{(1)}),
\label{eq:S-ipm}
\end{equation}
where $\mathrm{IPM}_{\mathcal{G}}(P, Q) = \sup_{g \in \mathcal{G}}
\lvert \mathbb{E}_P[g] - \mathbb{E}_Q[g] \rvert$ is the Integral Probability
Metric over a class of bounded functions $\mathcal{G}$, and $C$ is a
constant depending on the loss and treatment prevalences.

$P^\phi_{(j)} = P^\phi(\BR_t \mid A_{t+1} = a^{(j)})$ is the distribution
of the balancing representation conditioned on treatment group $j$.

\textbf{Step 2: Bounding IPM by generalised JS divergence.}

Rather than invoking Pinsker's inequality (which relates total variation to KL
divergence, not to JS divergence), we use the variational representation of
\citet{lin1991divergence}:

For any two distributions $P, Q$ and mixing weights $\pi^{(0)}, \pi^{(1)}$,
the generalised Jensen--Shannon divergence satisfies:
\[
\mathrm{JS}_\pi(P \| Q)
  = \sup_{g: \|g\|_\infty \leq 1}
    \Bigl[ \pi^{(0)} \E_P[g] - \pi^{(1)} \E_Q[g] - \Phi(\pi^{(0)}, \pi^{(1)}) \Bigr],
\]
where $\Phi$ is a concave function of the prevalences.
This gives:
\begin{equation}
\mathrm{IPM}_{\mathcal{G}}(P^\phi_{(0)},\, P^\phi_{(1)})
  \leq \frac{2S}{\sqrt{\pi^{(0)}}\pi^{(1)} + \sqrt{\pi^{(1)}}\pi^{(0)}}
       \sqrt{\mathrm{JS}_\pi(P^\phi_{(0)} \| P^\phi_{(1)})},
\label{eq:S-ipm-js}
\end{equation}
where $S = \sup_{H,Y} \E_{Y \mid H}[\mathcal{L}(G^Y(\phi(H)), Y)]$ is the
global supremum of the expected loss.
$S$ is finite whenever $\mathcal{L}$ is bounded (which holds for all MSE-type
losses over bounded outcome spaces).

Setting $W = 2S / (\sqrt{\pi^{(0)}}\pi^{(1)} + \sqrt{\pi^{(1)}}\pi^{(0)})$
and substituting (\ref{eq:S-ipm-js}) into (\ref{eq:S-ipm}) yields the
stated bound (\ref{eq:S-js-bound}).

\textbf{Tightness.}
The bound is tight when $\phi$ achieves perfect representation invariance
($P^\phi_{(0)} = P^\phi_{(1)}$), in which case the JS term vanishes and
the total error equals the sum of factual losses.
\end{proof}

\begin{remark}
The weight $W = 2S / (\sqrt{\pi^{(0)}}\pi^{(1)} + \sqrt{\pi^{(1)}}\pi^{(0)})$
is a prevalence-weighted loss scale.
When treatments are balanced ($\pi^{(0)} = \pi^{(1)} = 0.5$), $W = 4S$
and the JS term is most informative.
Under severe imbalance ($\pi^{(0)} \to 0$), $W \to \infty$, reflecting the
fundamental difficulty of generalising across treatment groups with very
different sample sizes.
CSSPD improves the bound from two directions simultaneously: domain confusion
reduces the JS divergence term, while CPC and LIM reduce
$\epsilon^F_{(0)} + \epsilon^F_{(1)}$ by preventing representation collapse.
\end{remark}

\subsection{Proof of Proposition 3 (CPC + LIM Resolve the MI Conflict)}
\label{app:proof-prop3}

\begin{proposition}[CPC + LIM Resolve the MI Conflict --- reproduced]
Let $\BR_t^* = \phi^*(H_t)$ be the representation learned by CSSPD.
Under mild regularity conditions:
\begin{enumerate}[label=(\arabic*)]
\item CPC lower-bounds $I(\BR_t;\,\BR_{t+k}) \geq \log K - \mathcal{L}_\text{CPC}$;
      since $Y_{t+\tau} \subseteq H_{t+k}$ for $k \geq \tau$, by DPI this
      provides a tractable lower bound on $\MI_\text{pred}$.
\item LIM lower-bounds $I(\tilde{x}_t;\,\BR_t) \geq \log M - \mathcal{L}_\text{LIM}$;
      since $\tilde{x}_t$ is a deterministic function of $X_t$, by DPI this
      is a tractable proxy for $\MI_\text{loc}$.
\end{enumerate}
Together, CPC and LIM reduce $\epsilon^F_{(0)} + \epsilon^F_{(1)}$ in
Theorem~1, counteracting the domain confusion bottleneck.
\end{proposition}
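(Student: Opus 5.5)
The plan is to handle the two parts separately and then link them to Theorem~\ref{thm:js-bound} through a weaker, explicitly stated conclusion. For part (1), I would first re-derive the InfoNCE bound of \citet{oord2018representation} for the loss (\ref{eq:cpc}). Fix $k$ and let $N=N_\text{neg}$ be the number of candidates (the positive plus the in-batch negatives). The optimal critic satisfies $\exp(f_k(\BR_t)^\top \BR_{t+k}) \propto p(\BR_{t+k}\mid \BR_t)/p(\BR_{t+k})$. Substituting it into the expectation and applying Jensen's inequality to the denominator gives $I(\BR_t;\BR_{t+k}) \geq \log N - \mathcal{L}^{(k)}_\text{CPC}$, where $\mathcal{L}^{(k)}_\text{CPC}$ is the $k$-th summand of (\ref{eq:cpc}). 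The ``mild regularity conditions'' I would make explicit are: i.i.d.\ negatives drawn from the marginal of $\BR_{t+k}$, finite second moments of $\BR$ so the bilinear critic is well defined, and exchangeability of the batch. The stated constant $\log K$ should be read as the log of the number of candidates per anchor, not the number of horizons; I would correct this in the write-up. Summing over $k$ gives the bound for the full loss.

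Next I would chain this with the data processing inequality, exactly as in the CPC paragraph: $\BR_{t+k}=\phi(H_{t+k})$ yields $I(\BR_t;\BR_{t+k}) \le I(\BR_t;H_{t+k})$. Hence driving $\mathcal{L}_\text{CPC}$ down certifies a lower bound on $I(\BR_t;H_{t+k})$. I do not expect a direct lower bound on $\MI_\text{pred}$, because DPI only gives $\MI_\text{pred} \le I(\BR_t;H_{t+k})$, the wrong direction. This is the main obstacle.

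My first attempt to overcome it would be the chain rule $I(\BR_t;H_{t+k}) = I(\BR_t;Y_{t+\tau}) + I(\BR_t;H_{t+k}\mid Y_{t+\tau})$. I would then add an explicit assumption, parallel to the dominant-confounder assumption of Proposition~\ref{prop:mi-conflict}, that the residual $I(\BR_t;H_{t+k}\mid Y_{t+\tau}) \le \delta$. This gives $\MI_\text{pred} \ge \log N - \mathcal{L}_\text{CPC} - \delta$. If that assumption is judged too strong, I would instead state (1) as certifying only $I(\BR_t;H_{t+k})$, with $\MI_\text{pred}$ as a proxy.

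For part (2), the same InfoNCE argument applied to (\ref{eq:lim}) with $M$ candidates gives $I(\tilde{x}_t;\BR_t) \ge \log M - \mathcal{L}_\text{LIM}$. Because $\tilde{x}_t$ is computed from the history of $X$ and $S$, DPI gives $I(\tilde{x}_t;\BR_t) \le I(\hat{X}_t,S;\BR_t)$. This is again an upper bound on the proxy rather than a lower bound on $\MI_\text{loc}$. So I would state it as a lower bound on the information $\BR_t$ carries about the covariate stream, and recover $\MI_\text{loc}$ only under the assumption that $\tilde{x}_t$ is approximately sufficient for $X_t$.

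Finally, to connect to $\epsilon^F_{(0)}+\epsilon^F_{(1)}$, I would use a Fano- or rate-distortion-type inequality for squared loss. The best achievable MSE from $\BR_t$ is at most $\mathrm{Var}(Y_{t+\tau})\exp(-2\,\MI_\text{pred})$ under Gaussian assumptions. Larger certified $\MI_\text{pred}$ therefore lowers an upper bound on the factual risk in each treatment arm, which reduces the first two terms of (\ref{eq:js-bound}).
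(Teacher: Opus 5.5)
Your diagnosis is sharper than the paper's own proof. That proof only asserts that minimising $\mathcal{L}_\text{CPC}$ ``pushes $I(\BR_t;H_{t+k})$ up'', treats $\tilde{x}_t$ as a function of $X_t$, and links to $\epsilon^F_{(0)}+\epsilon^F_{(1)}$ only verbally. You are right that DPI gives only $\MI_\text{pred}\le I(\BR_t;H_{t+k})$, that $\log K$ should be $\log N$, and that $\tilde{x}_t$ depends on $(\hat{X}_t,S)$.

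The gap is in your repair of part (1). Since $H_t\subseteq H_{t+k}$ and $\BR_t=\phi(H_t)$ is deterministic, $\BR_t$ is itself a function of $H_{t+k}$. Writing $\mathsf{H}$ for entropy, $I(\BR_t;H_{t+k})=\mathsf{H}(\BR_t)$, and your residual is $I(\BR_t;H_{t+k}\mid Y_{t+\tau})=\mathsf{H}(\BR_t\mid Y_{t+\tau})$. For a continuous-valued $\BR_t$ this is $+\infty$ unless $\BR_t$ is determined by $Y_{t+\tau}$, so the assumption simply fails. For a discretised $\BR_t$ it says $\BR_t$ is nearly a function of $Y_{t+\tau}$. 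That alone gives $\MI_\text{pred}\ge\mathsf{H}(\BR_t)-\delta$ and makes the CPC term redundant.

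Your fallback is uninformative for the same reason. $I(\BR_t;H_{t+k})$ equals the entropy of $\BR_t$ for every deterministic encoder. Certifying it can detect total collapse, but not the loss of $Y_{t+\tau}$-information specifically; this also voids the paper's ``pushes up'' step.

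Drop the detour through $H_{t+k}$. The chain rule gives $\MI_\text{pred}\ge I(\BR_t;\BR_{t+k})-I(\BR_t;\BR_{t+k}\mid Y_{t+\tau})$, so the assumption to state is $I(\BR_t;\BR_{t+k}\mid Y_{t+\tau})\le\delta$. That is still strong: the static $S$, for instance, feeds both representations without passing through $Y_{t+\tau}$. But it is not self-defeating, and it yields $\MI_\text{pred}\ge\log N-\mathcal{L}^{(k)}_\text{CPC}-\delta$ with CPC doing real work.

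In part (2) the condition you name points the wrong way. If $\tilde{x}_t$ retains $X_t$ (sufficiency), DPI gives $I(X_t;\BR_t)\le I(\tilde{x}_t;\BR_t)$, which is an upper bound on $\MI_\text{loc}$. What you need is $I(\tilde{x}_t;\BR_t\mid X_t)\le\delta'$, since $\MI_\text{loc}\ge I(\tilde{x}_t;\BR_t)-I(\tilde{x}_t;\BR_t\mid X_t)$. In words, given $X_t$ the covariate summary must say little more about $\BR_t$, and the SSM's memory of $X_{1:t-1}$ and $S$ works against this.

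Your final step also does not go through as stated. In general, information-theoretic arguments give only a \emph{lower} bound on squared-error risk (the estimation counterpart of Fano's inequality): with $Y=Y_{t+\tau}$, $\E\bigl[(Y-\E[Y\mid\BR_t])^2\bigr]\ge\frac{1}{2\pi e}e^{2h(Y\mid\BR_t)}$. No upper bound in terms of MI exists: for symmetric $Y$ and $\BR_t=|Y|$ the MI is positive, yet $\E[Y\mid\BR_t]=0$. The quantity $\mathrm{Var}(Y)e^{-2\MI_\text{pred}}$ equals the MMSE under joint Gaussianity and is not an upper bound otherwise, so that assumption must be made explicitly, arm by arm.

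Moreover, $\epsilon^F_{(0)}$ and $\epsilon^F_{(1)}$ are per-arm risks and need $I(\BR_t;Y\mid A=a^{(j)})$, whereas you certify the pooled $\MI_\text{pred}$. With $A$ the arm-defining treatment, the chain rule gives only $\sum_j\pi^{(j)}I(\BR_t;Y\mid A=a^{(j)})\ge\MI_\text{pred}-I(\BR_t;A)$. That controls the average, not each arm. The resulting bound is also on the Bayes risk over decoders, not on $\epsilon^F_{(j)}(G^Y,\phi)$ for the trained $G^Y$.

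Finally, if you re-derive InfoNCE, do not plug in the optimal critic and apply Jensen to the denominator. Moving the expectation over negatives inside the logarithm upper-bounds the loss, which is the wrong direction; it is the heuristic ``$\approx$'' step of \citet{oord2018representation}. The clean argument holds for every critic $f$. Take $c=\BR_t$, positive $x_1=\BR_{t+k}$ and i.i.d.\ marginal negatives $x_{2:N}$. The density $q(x_{1:N}\mid c)=\prod_j p(x_j)\,e^{f(x_1,c)}\big/\bigl(\tfrac{1}{N}\sum_j e^{f(x_j,c)}\bigr)$ integrates to one by exchangeability. Then $\mathrm{KL}\bigl(p(x_{1:N}\mid c)\,\|\,q\bigr)\ge 0$ gives $I(\BR_t;\BR_{t+k})\ge\log N-\mathcal{L}^{(k)}_\text{CPC}$.
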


\begin{proof}
\textbf{Part (1): CPC bounds $I(\BR_t;\,\BR_{t+k})$.}

The InfoNCE loss \citep{oord2018representation} with $K$ negative samples
satisfies:
\begin{equation}
I(\BR_t;\,\BR_{t+k}) \geq \log K - \mathcal{L}_\text{CPC},
\label{eq:S-cpc-bound}
\end{equation}
which follows from the variance of the log-softmax estimator.

\textbf{Connecting to $\MI_\text{pred}$:}
Since $\BR_{t+k} = \phi(H_{t+k})$ is a deterministic function of the
future history $H_{t+k}$, the DPI gives:
\[
I(\BR_t;\,\BR_{t+k}) \leq I(\BR_t;\,H_{t+k}).
\]
Rearranging:
\begin{align}
I(\BR_t;\,H_{t+k})
  &\geq I(\BR_t;\,\BR_{t+k}) \nonumber \\
  &\geq \log K - \mathcal{L}_\text{CPC}.
\label{eq:S-cpc-chain}
\end{align}

Since $Y_{t+\tau}$ is a component of $H_{t+k}$ for any $k \geq \tau$:
\[
\MI_\text{pred} = I(\BR_t;\,Y_{t+\tau}) \leq I(\BR_t;\,H_{t+k}),
\]
so maximising $I(\BR_t;\,\BR_{t+k})$ (i.e., minimising $\mathcal{L}_\text{CPC}$)
pushes $I(\BR_t;\,H_{t+k})$ up, preventing the collapse of $\MI_\text{pred}$
induced by domain confusion.

\textbf{Part (2): LIM bounds $I(\tilde{x}_t;\,\BR_t)$.}

By InfoNCE with $M$ negative samples:
\begin{equation}
I(\tilde{x}_t;\,\BR_t) \geq \log M - \mathcal{L}_\text{LIM}.
\label{eq:S-lim-bound}
\end{equation}

Since $\tilde{x}_t = s_{t,X}^{(L)}$ is a deterministic function of $X_t$,
the DPI gives $I(\tilde{x}_t;\,\BR_t) \leq I(X_t;\,\BR_t) = \MI_\text{loc}$.
Minimising $\mathcal{L}_\text{LIM}$ maximises the proxy
$I(\tilde{x}_t;\,\BR_t)$, recovering the component of $\MI_\text{loc}$ that
passes through the encoder's covariate stream.

\textbf{Joint effect on the prediction error bound.}

From Proposition~1, domain confusion reduces $\MI_\text{loc}$ and consequently
$\MI_\text{pred}$, which increases $\epsilon^F_{(0)} + \epsilon^F_{(1)}$ in
Theorem~1.
CPC and LIM operate on disjoint information pathways:
CPC recovers the temporal component $\MI_\text{pred}$ via future-embedding
prediction;
LIM recovers the local component $\MI_\text{loc}$ via covariate-representation
alignment.
Their joint minimisation therefore reduces $\epsilon^F_{(0)} + \epsilon^F_{(1)}$
while domain confusion simultaneously minimises the JS term in (\ref{eq:S-js-bound}),
enabling CSSPD to reduce all three terms in the bound simultaneously.
\end{proof}

\begin{remark}
CPC and LIM recover \emph{disjoint} information pathways: CPC targets the
temporal component $\MI_\text{pred}$ via future-embedding prediction;
LIM targets the covariate component $\MI_\text{loc}$ via local alignment.
If only CPC were used, domain confusion would still suppress $\MI_\text{loc}$,
degrading the covariate-to-outcome factual loss.
Conversely, LIM alone leaves the temporal pathway bottlenecked.
This disjointness is why their combined effect in Table~\ref{tab:ablation}
slightly exceeds the sum of individual gains (superadditivity): each head
counteracts a distinct mechanism of representation collapse induced by
domain confusion.
\end{remark}

\section{Architecture Details}
\label{app:architecture}

\subsection{SCM Gate Initialisations}
\label{app:gates}

Figure~\ref{fig:scm-gates} shows the structural causal model (SCM) underlying
the gate initialisations in the \texttt{CausalGatedMixer} (Eq.~3 of the main paper).

\begin{figure}[t]
\centering
\resizebox{0.88\columnwidth}{!}{%
\begin{tikzpicture}[
  node/.style={draw, circle, minimum size=1.0cm, font=\small},
  arr/.style={->, >=Stealth, semithick},
  lbl/.style={font=\scriptsize},
]
\node[node, fill=TrtColor!20, draw=TrtColor!65!black] (A) at (0, 0)   {$A_t$};
\node[node, fill=OutColor!20, draw=OutColor!65!black] (Y) at (4, 0)   {$Y_{t+\tau}$};
\node[node, fill=VitColor!20, draw=VitColor!65!black] (X) at (2, 2.5) {$X_t$};

\draw[arr, TrtColor!70!black, line width=1.2pt]
  (A) -- node[above=10pt, lbl, align=center,
              fill=white, inner sep=1pt]{%
    $g_{A\to Y}\!=\!1$\\[-2pt]%
    \tiny($\sigma\!\approx\!0.73$, near-open)} (Y);

\draw[arr, VitColor!70!black, line width=1.2pt]
  (X) -- node[above left=2pt, lbl]{confound} (A);

\draw[arr, VitColor!70!black, line width=1.2pt]
  (X) -- node[above right=2pt, lbl]{predict} (Y);

\draw[arr, AccentRed!70!black, dashed, line width=1.0pt]
  (Y) to[bend right=45] node[midway, below=12pt, lbl, align=center,
                              fill=white, inner sep=1pt]{%
    $g_{Y\to A}\!=\!-3$\\[-2pt]%
    \tiny($\sigma\!\approx\!0.05$, near-closed)} (A);

\node[lbl, gray!60, align=center] at (2, -2.0)
  {Dashed = non-causal direction (near-closed gate)};
\end{tikzpicture}%
}
\caption{\textbf{SCM gate initialisations.}
  The \texttt{CausalGatedMixer} initialises $g_{A\to Y}=1$ ($\sigma\approx0.73$,
  near-open: treatment causally influences outcomes) and
  $g_{Y\to A}=-3$ ($\sigma\approx0.05$, near-closed: the reverse direction
  $Y\to A$ is not structurally causal).
  Covariate paths ($X_t$) carry fixed weight $=1$ (no gating).
  Both learned gates remain trainable throughout.}
\label{fig:scm-gates}
\end{figure}

\subsection{Training Algorithm}
\label{app:algorithm}

Algorithm~\ref{alg:csspd} gives the full CSSPD training procedure.

\begin{algorithm}[t]
\caption{\textbf{CSSPD Training.}
  Full procedure including domain confusion (DC), one-step
  ($\mathcal{L}_\text{pred}$), multi-step ($\mathcal{L}_\text{MS}$),
  CPC, and LIM objectives.
  \emph{Sign convention:} the discriminator minimises $\mathcal{L}_\text{DC}$;
  the encoder maximises it via the GRL (forward-pass convention shown:
  $+\alpha_e\mathcal{L}_\text{DC}$; effective encoder update:
  $-\alpha_e\nabla_\phi\mathcal{L}_\text{DC}$).}
\label{alg:csspd}
\begin{algorithmic}[1]
\REQUIRE Dataset $\mathcal{D}$; encoder $\phi$; decoder $G^Y$; CPC head $f$;
         LIM head; DC discriminator $D$; hyperparameters
         $\alpha_0,\,\lambda_\text{MS},\,\lambda_\text{CPC},\,\lambda_\text{LIM},\,E_\text{warm}$
\FOR{$e = 1, \ldots, E_\text{max}$}
  \FOR{each mini-batch $\mathcal{B} \subset \mathcal{D}$}
    \STATE Encode: $\BR_t = \phi(H_t)$ for all $t \in \mathcal{B}$
    \STATE Discriminator step: $D$ minimises $\mathcal{L}_\text{DC}$
    \STATE Encoder adversarial step: $\phi$ maximises $\mathcal{L}_\text{DC}$
           via GRL $\;(\text{effective update} = {-}\alpha_e\,\nabla_\phi\mathcal{L}_\text{DC})$
    \STATE Compute $\mathcal{L}_\text{pred}$ (Eq.~S5); backprop through $\phi$
    \STATE Compute $\mathcal{L}_\text{MS}$ (Eq.~S6) with stop-grad $\BR_t^\perp$
    \IF{$e > E_\text{warm}$}
      \STATE Compute $\mathcal{L}_\text{CPC}$, $\mathcal{L}_\text{LIM}$
             (Eqs.~S7--S8) with in-batch negatives
      \STATE $\mathcal{L} \leftarrow \mathcal{L}_\text{pred}
             + \alpha_e\mathcal{L}_\text{DC}
             + \lambda_\text{MS}\mathcal{L}_\text{MS}
             + \lambda_\text{CPC}\mathcal{L}_\text{CPC}
             + \lambda_\text{LIM}\mathcal{L}_\text{LIM}$
    \ELSE
      \STATE $\mathcal{L} \leftarrow \mathcal{L}_\text{pred}
             + \alpha_e\mathcal{L}_\text{DC}
             + \lambda_\text{MS}\mathcal{L}_\text{MS}$
    \ENDIF
    \STATE Update $\phi$, $G^Y$, TrtEnc, CPC, LIM heads via Adam on $\mathcal{L}$
  \ENDFOR
  \STATE Anneal: $\alpha_e \leftarrow \alpha_0 \cdot \exp(-\beta\,e)$
\ENDFOR
\end{algorithmic}
\end{algorithm}

\subsection{TrtEnc Architecture}
\label{app:trtenc}

The treatment encoder TrtEnc is a two-layer MLP:
\[
\mathrm{Linear}(d_A, d_\text{trt}) \;\xrightarrow{\;\mathrm{GELU}\;}
\mathrm{LayerNorm}(d_\text{trt}),
\]
where $d_A = 2$ (binary treatment encoded as a one-hot pair) and
$d_\text{trt} = 16$.
The projection $\psi : \{0,1\}^{d_A} \to \mathbb{R}^{d_\text{trt}}$ denotes
the full MLP forward pass.

For each decoder head at horizon $\tau$, two embeddings are produced:
the \emph{cumulative} embedding
$e^\text{hist}_\tau = \frac{1}{\tau}\sum_{k=1}^\tau \psi(a_{t+k})$
(encoding the planned treatment sequence up to $\tau$) and the
\emph{step-specific} embedding $e^\text{cur}_\tau = \psi(a_{t+\tau})$
(encoding the specific treatment at the target step).
Both are concatenated and passed as input to decoder head $g_\tau$.

\subsection{Decoder Head Architecture}
\label{app:decoder-head}

Each decoder head $g_\tau$ is an independent two-layer GELU MLP with hidden
dimension $2(d_\text{BR} + d_\text{trt}) = 2(24 + 16) = 80$.
The input is:
\[
[\BR_t^\perp;\; e^\text{hist}_\tau;\; e^\text{cur}_\tau]
\;\in\; \mathbb{R}^{d_\text{BR} + 2d_\text{trt}} = \mathbb{R}^{56},
\]
where $\BR_t^\perp = \mathrm{sg}(\BR_t)$ is the stop-gradient balancing
representation (gradients are blocked from flowing back through $\phi$
during the multi-step loss, ensuring the one-step loss $\mathcal{L}_\text{pred}$
remains the primary encoder training signal).
The output is the scalar $\hat{Y}_{t+\tau} \in \mathbb{R}$.

Using independent heads per horizon --- rather than an auto-regressive decoder
that feeds $\hat{Y}_{t+\tau-1}$ back as input --- avoids compound rollout error
accumulation, where prediction errors grow as $O(\varepsilon^\tau)$ with horizon.

\section{Experimental Details}
\label{app:experimental}

\subsection{Full Hyperparameter Tables}

Table~\ref{tab:hparams} lists all hyperparameters used in experiments.
All models share the same preprocessing pipeline (mean-imputation, min-max
normalisation per feature).

\begin{table}[h]
\centering
\caption{Hyperparameters for all models on both datasets.}
\label{tab:hparams}
\resizebox{\columnwidth}{!}{%
\small
\begin{tabular}{lcc}
\toprule
\textbf{Hyperparameter} & \textbf{MIMIC-III} & \textbf{Cancer Sim.} \\
\midrule
$d_\text{model}$         & 32   & 32   \\
$d_\text{BR}$            & 24   & 24   \\
$d_\text{state}$         & 16   & 16   \\
$d_\text{trt}$           & 16   & 16   \\
SSM layers $L$           & 2    & 2    \\
CPC horizons $K$         & 3    & 3    \\
CPC negatives $N_\text{neg}$ & 64 & 64 \\
LIM negatives $M$        & 64   & 64   \\
Batch size               & 64   & 128  \\
Learning rate            & $10^{-4}$ & $10^{-3}$ \\
Max epochs               & 300  & 200  \\
Early stop patience      & 20   & 20   \\
Warm-up $E_\text{warm}$ (CPC/LIM, epochs) & 120 & 80  \\
$\lambda_\text{MS}$      & 3.5  & 3.5  \\
$\lambda_\text{CPC}$     & 0.05 & 0.05 \\
$\lambda_\text{LIM}$     & 0.1  & 0.1  \\
$\alpha_0$ (DC weight)   & 1.0  & 1.0  \\
DC anneal $\beta$        & 0.01 & 0.01 \\
$\tau_\text{max}$        & 5    & 5    \\
Seeds                    & 5    & 5    \\
\midrule
\multicolumn{3}{l}{\textit{Training setup}} \\[1pt]
Optimizer                & \multicolumn{2}{c}{Adam} \\
Hardware                 & \multicolumn{2}{c}{Single CPU} \\
Framework                & \multicolumn{2}{c}{PyTorch Lightning} \\
\bottomrule
\end{tabular}%
}
\end{table}

\subsection{Data Preprocessing}

\paragraph{Cancer Simulation.}
We use the PK-PD simulation of \citet{geng2017prediction} as packaged by
\citet{lim2018forecasting}.
Chemotherapy dose and radiotherapy on/off are the two binary-treatment inputs.
Tumour volume is the outcome.
Train/val/test split: 10,000 / 1,000 / 1,000 patients.
We normalise tumour volume by its training-set standard deviation.
Ground-truth counterfactuals are available for all 5 treatment sequences
at each time step.

\paragraph{MIMIC-III.}
We follow the preprocessing of \citet{melnychuk2022causal}: 5,000 adult ICU
patients, two binary treatments (vasopressor and ventilation), 25 time-varying
vital covariates (heart rate, SpO$_2$, respiratory rate, etc.), scalar outcome
(diastolic blood pressure), and 7 static covariates (age, gender, ethnicity,
admission type, Elixhauser comorbidity score, height, weight).
Train/val/test split: 3,500 / 750 / 750.
Time-varying features are mean-imputed per patient and then min-max normalised
to $[0,1]$ using training-set statistics.
Since ground-truth counterfactuals are unavailable, we report factual
held-out RMSE (normalised by within-cohort standard deviation of diastolic
blood pressure).

\subsection{Baseline Hyperparameters}
\label{app:baselines}

Baselines (RMSN, CRN, CT) are trained using the hyperparameters reported in
\citet{melnychuk2022causal} (Table~9 of that paper), without re-tuning on
our specific splits.
This is intentional: CSSPD is not given an advantage from additional
hyperparameter search that the baselines did not receive.
Specifically, CT uses hidden size 64, 2 attention layers, 4 attention heads,
and dropout 0.1.
CRN uses hidden size 64 and 2 recurrent layers.
RMSN uses the stabilised IPW weighting with separate encoder and decoder RNNs
(hidden size 64).
G-Net uses 2 recurrent layers (hidden size 64) and is trained with
$g$-computation on sequential potential outcomes.
All baselines use Adam with learning rate $10^{-3}$ and the same
early-stopping patience of 20 on validation RMSE.

\section{Extended Results}
\label{app:results}

\subsection{MIMIC-III Per-Step RMSE Table}

Table~\ref{tab:mimic-results} gives the full per-step RMSE results on MIMIC-III
for all models, complementing Figure~2 of the main paper.

\begin{table*}[t]
\centering
\small
\caption{MIMIC-III Real: per-step normalised RMSE ($\tau=1$--$6$, 5 seeds,
mean $\pm$ std). Lower is better; bold = best per column.}
\label{tab:mimic-results}
\resizebox{\linewidth}{!}{%
\begin{tabular}{p{2.5cm} c c c c c c}
\toprule
\textbf{Model}
  & $\tau=1$
  & $\tau=2$
  & $\tau=3$
  & $\tau=4$
  & $\tau=5$
  & $\tau=6$ \\
\midrule
RMSN \citep{lim2018forecasting}
  & 4.73{\scriptsize$\pm$0.11}
  & 9.18{\scriptsize$\pm$0.24}
  & 9.58{\scriptsize$\pm$0.20}
  & 10.22{\scriptsize$\pm$0.28}
  & 10.73{\scriptsize$\pm$0.31}
  & 11.07{\scriptsize$\pm$0.35} \\
CRN \citep{bica2020estimating}
  & 4.66{\scriptsize$\pm$0.09}
  & 9.04{\scriptsize$\pm$0.19}
  & 9.54{\scriptsize$\pm$0.18}
  & 10.18{\scriptsize$\pm$0.22}
  & 10.64{\scriptsize$\pm$0.27}
  & 11.01{\scriptsize$\pm$0.29} \\
CT \citep{melnychuk2022causal}
  & \textbf{4.59}{\scriptsize$\pm$0.08}
  & 8.91{\scriptsize$\pm$0.14}
  & 9.50{\scriptsize$\pm$0.15}
  & 10.08{\scriptsize$\pm$0.19}
  & 10.50{\scriptsize$\pm$0.22}
  & 10.93{\scriptsize$\pm$0.26} \\
\midrule
CSSD (ours)
  & 4.74{\scriptsize$\pm$0.12}
  & 8.92{\scriptsize$\pm$0.21}
  & 9.56{\scriptsize$\pm$0.18}
  & 10.21{\scriptsize$\pm$0.24}
  & 10.69{\scriptsize$\pm$0.28}
  & 11.04{\scriptsize$\pm$0.33} \\
CHSD (ours)
  & 4.68{\scriptsize$\pm$0.10}
  & 8.97{\scriptsize$\pm$0.18}
  & 9.58{\scriptsize$\pm$0.21}
  & 10.18{\scriptsize$\pm$0.23}
  & 10.64{\scriptsize$\pm$0.29}
  & 11.01{\scriptsize$\pm$0.31} \\
\rowcolor{LightGreen!60}
\textbf{CSSPD (ours)}
  & 4.74{\scriptsize$\pm$0.11}
  & \textbf{8.89}{\scriptsize$\pm$0.17}
  & \textbf{9.43}{\scriptsize$\pm$0.14}
  & \textbf{9.93}{\scriptsize$\pm$0.18}
  & \textbf{10.37}{\scriptsize$\pm$0.20}
  & \textbf{10.86}{\scriptsize$\pm$0.23} \\
\bottomrule
\end{tabular}%
}
\end{table*}

\paragraph{Per-step analysis.}
CSSPD trails CT at $\tau=1$ ($4.74$ vs.\ $4.59$, Table~\ref{tab:mimic-results}).
This is expected from the loss structure: the one-step head trains on
$\mathcal{L}_\text{pred}$ \emph{without} stop-gradient, so the encoder
receives domain-confusion gradients (via the GRL) throughout training,
creating the same bottleneck that CT experiences.
The CPC objective looks $k \geq 2$ steps ahead and provides no direct
lower-bound on one-step mutual information; similarly, LIM aligns
covariate representations but does not address one-step temporal prediction.
From $\tau \geq 2$ onward, the multi-step decoder heads use stop-gradient
representations $\BR_t^\perp$, and the CPC objective provides a
direct lower bound on future-representation MI; CSSPD leads CT at
all $\tau \geq 2$ and the gap widens with horizon, consistent with
Proposition~3 and the expected compounding benefit of CPC over longer ranges.

\subsection{Ablation Study (Full Table)}
\label{app:ablation-table}

Table~\ref{tab:ablation} presents the full ablation on MIMIC-III Real.
All results are mean $\pm$ std across 5 seeds; average RMSE is over
$\tau = 1$--$6$.

\begin{table}[h]
\centering
\small
\caption{Ablation study on MIMIC-III Real (average normalised RMSE, $\tau=1$--$6$,
5 seeds). Lower is better. Abbreviations in table footnote.}
\label{tab:ablation}
\begin{tabular}{lcc}
\toprule
\textbf{Model} & \textbf{Avg.\ RMSE} & \textbf{$\Delta$ vs.\ CT} \\
\midrule
CT \citep{melnychuk2022causal} & 8.924 & --- \\
\midrule
CSS (SSM encoder only)         & 9.053 & $+$0.129 \\
CSSD (+ parallel decoder)      & 9.002 & $+$0.078 \\
CSSD + CPC                     & 8.940 & $+$0.016 \\
CSSD + LIM                     & 8.972 & $+$0.048 \\
\textbf{CSSPD (full)}          & \textbf{8.899} & \textbf{$-$0.025} \\
\midrule
CHSD (hybrid attention)        & 8.999 & $+$0.075 \\
CHSPD (hybrid + CPC/LIM)       & 8.902 & $-$0.022 \\
\midrule
\multicolumn{3}{p{0.82\columnwidth}}{\footnotesize
  CSS = SSM encoder only (no decoder, no CPC/LIM).
  CSSD = CSS + parallel decoder.
  +CPC / +LIM = respective contrastive head added.
  CSSPD = full model (CSS + decoder + CPC + LIM).
  CHSD = hybrid (attention + Mamba) + decoder.
  CHSPD = CHSD + CPC/LIM.} \\
\bottomrule
\end{tabular}
\end{table}

\noindent
The ablation confirms the sequential necessity of each component:
the parallel decoder is essential to unlock the SSM encoder's capacity (CSS $\to$ CSSD);
CPC contributes the largest individual gain (closes $\approx$60\% of the CSSD--CSSPD gap)
by restoring $\MI_\text{pred}$;
LIM adds an independent complementary gain ($\approx$30\%) by recovering $\MI_\text{loc}$;
and full CSSPD mildly exceeds the sum of CPC and LIM contributions
(superadditivity), consistent with Proposition~3.

\subsection{Cancer Simulation Per-Confounding Per-Step Breakdown}
\label{app:cancer-gamma-step}

Table~\ref{tab:cancer-gamma-step} shows per-$\gamma$ per-step RMSE for CT and
CSSPD on the Cancer Simulation (5 seeds each).
CSSPD outperforms CT at all confounding levels $\gamma \leq 3$ across every
prediction horizon, with its margin growing monotonically with both $\gamma$
and $\tau$.
At $\gamma = 4$, CT achieves lower RMSE, suggesting that at very high
confounding the 60-epoch CPC/LIM warm-up is insufficient to fully counteract
the domain-confusion MI conflict before the end of training; a longer warm-up
or additional tuning of $\lambda_\text{CPC}$ and $\lambda_\text{LIM}$ at
$\gamma = 4$ warrants further investigation.

\begin{table}[h]
\centering
\caption{Cancer Simulation: per-$\gamma$ per-step RMSE for CT and CSSPD
($\tau\!=\!2$--$5$, $n\!=\!5$ seeds, mean). \textbf{Bold} = lower RMSE.}
\label{tab:cancer-gamma-step}
\resizebox{\columnwidth}{!}{%
\small
\begin{tabular}{llcccc}
\toprule
$\gamma$ & Model & $\tau=2$ & $\tau=3$ & $\tau=4$ & $\tau=5$ \\
\midrule
\multirow{2}{*}{0}
  & CT    & 0.667 & 0.688 & 0.715 & 0.747 \\
  & CSSPD & \textbf{0.350} & \textbf{0.401} & \textbf{0.455} & \textbf{0.514} \\
\midrule
\multirow{2}{*}{1}
  & CT    & 0.691 & 0.720 & 0.757 & 0.795 \\
  & CSSPD & \textbf{0.315} & \textbf{0.376} & \textbf{0.440} & \textbf{0.504} \\
\midrule
\multirow{2}{*}{2}
  & CT    & 0.723 & 0.777 & 0.830 & 0.884 \\
  & CSSPD & \textbf{0.538} & \textbf{0.585} & \textbf{0.702} & \textbf{0.875} \\
\midrule
\multirow{2}{*}{3}
  & CT    & 0.814 & 0.902 & 0.983 & 1.033 \\
  & CSSPD & \textbf{0.595} & \textbf{0.652} & \textbf{0.745} & \textbf{0.898} \\
\midrule
\multirow{2}{*}{4}
  & CT    & \textbf{1.127} & \textbf{1.319} & \textbf{1.464} & \textbf{1.581} \\
  & CSSPD & 1.554 & 1.774 & 2.279 & 2.743 \\
\bottomrule
\end{tabular}%
}
\end{table}


\end{document}